\newtheorem{theorem}{Theorem}
\newtheorem{lemma}{Lemma}
\newtheorem{definition}{Definition}
\newcommand{\ignore}[1]{}
\newcommand{\R}{\mathbb R}
\newcommand{\z}{\bm{z}}
\newcommand{\eps}{\varepsilon}
\newcommand{\bb}{\bm{b}}
\newcommand{\bg}{\bm{g}}
\newcommand{\bt}{\bm{t}}
\renewcommand{\bt}{\bm{\theta}}
\newcommand{\bto}{\bt^*}
\newcommand{\btt}{\bt^t}
\newcommand{\ttn}{\bt^{t+1}}
\newcommand{\so}{s^*}
\newcommand{\s}{s}
\newcommand{\sa}{q}
\newcommand{\gt}{\bg^t}
\renewcommand{\bb}{\bm{\beta}}
\newcommand{\bbt}{\bb^t}
\newcommand{\bwt}{\widehat{\bt}^t}
\newcommand{\W}{W}
\newcommand{\Wo}{\W^*}
\newcommand{\Sec}[1]{\hyperref[sec:#1]{\S\ref*{sec:#1}}} %section
\newcommand{\Eqn}[1]{\hyperref[eq:#1]{(\ref*{eq:#1})}} %equation
\newcommand{\Fig}[1]{\hyperref[fig:#1]{Fig.\,\ref*{fig:#1}}} %figure
\newcommand{\Tab}[1]{\hyperref[tab:#1]{Tab.\,\ref*{tab:#1}}} %table
\newcommand{\Thm}[1]{\hyperref[thm:#1]{Theorem\,\ref*{thm:#1}}} %theorem
\newcommand{\Lem}[1]{\hyperref[lem:#1]{Lemma\,\ref*{lem:#1}}} %lemma
\newcommand{\Prop}[1]{\hyperref[prop:#1]{Prop.~\ref*{prop:#1}}} %property
\newcommand{\Cor}[1]{\hyperref[cor:#1]{Corollary~\ref*{cor:#1}}} %corollary
\newcommand{\Def}[1]{\hyperref[def:#1]{Definition~\ref*{def:#1}}} %definition
\newcommand{\Alg}[1]{\hyperref[alg:#1]{Alg.~\ref*{alg:#1}}} %algorithm
\newcommand{\Ex}[1]{\hyperref[ex:#1]{Ex.~\ref*{ex:#1}}} %example
\newcommand{\Clm}[1]{\hyperref[clm:#1]{Claim~\ref*{clm:#1}}} %example
\newcommand{\ip}[2]{\langle #1, #2 \rangle}
\title{On Iterative Hard Thresholding Methods for High-dimensional M-Estimation}
\author{
Prateek Jain$^\ast$ \qquad\qquad Ambuj Tewari$^\dagger$ \qquad\qquad Purushottam Kar$^\ast$\\
$^\ast$Microsoft Research, INDIA\\
$^\dagger$University of Michigan, Ann Arbor, USA\\
\texttt{\{prajain,t-purkar\}@microsoft.com, tewaria@umich.edu}
}
\renewcommand{\ss}{L}
\renewcommand{\sc}{\alpha}
\renewcommand{\S}{S}
\newcommand{\So}{\S^*}
\newcommand{\St}{\S^t}
\newcommand{\Stn}{\S^{t+1}}
\newcommand{\It}{I^t}
\newcommand{\vt}{\bm{v}^t}
\begin{document}

\maketitle

\begin{abstract}
The use of M-estimators in generalized linear regression models in high dimensional settings requires risk minimization with hard $L_0$ constraints. Of the known methods, the class of projected gradient descent (also known as iterative hard thresholding (IHT)) methods is known to offer the fastest and most scalable solutions. However, the current state-of-the-art is only able to analyze these methods in extremely restrictive settings which do not hold in high dimensional statistical models. In this work we bridge this gap by providing the first analysis for IHT-style methods in the high dimensional statistical setting. Our bounds are tight and match known minimax lower bounds. Our results rely on a general analysis framework that enables us to analyze several popular hard thresholding style algorithms (such as HTP, CoSaMP, SP) in the high dimensional regression setting. We also extend our analysis to a large family of ``fully corrective methods'' that includes two-stage and partial hard-thresholding algorithms. We show that our results hold for the problem of sparse regression, as well as low-rank matrix recovery.

\end{abstract}
% we show that wolfe's algorithm converges to epsilon approximate solution in $1/\epsilon^3$ time, which is first such analysis for wolfe's algorithm. We then show that by application of our analysis to submodular minimization, we can obtain $O(F^2n^9)$ time algorithm for wolfe's algorithm, which is also first pseudo polytime analysis for wolfe's algorithm applied to submodular minimization. 
%\end{abstract}
\section{Introduction}
Modern statistical estimation is routinely faced with real world problems where the number of parameters $p$ handily outnumbers the number of observations $n$. In general, consistent estimation of parameters is not possible in such a situation. Consequently, a rich line of work has focused on models that satisfy special structural assumptions such as sparsity or low-rank structure. Under these assumptions, several works (for example, see \cite{BuhlmannV2011, NegahbanW2011, RaskuttiWY2011, RohdeT2011,  NegahbanRWY2012}) have established that consistent estimation is information theoretically possible in the ``$n \ll p$" regime as well.

%Several real-world problems routinely generate high-dimensional estimation problems where the number $p$ of parameters is significantly larger than the number $n$ of observations. In general, consistent estimation of parameters in such a situation is not possible. However, several results have shown that under certain special structural assumptions about the true model (e.g., sparsity or low-rank structure), consistent estimation is possible in such a regime (``$n \ll p$ regime") as well (see, e.g., \cite{wainwrightl1, wainwrightmatrix, NegahbanRWY2012, BuhlmannV2011}).

The question of efficient estimation, however, is faced with feasibility issues since consistent estimation routines often end-up solving NP-hard problems. Examples include sparse regression which requires loss minimization with sparsity constraints and low-rank regression which requires dealing with rank constraints which are not efficiently solvable in general \cite{Natarajan1995}.

%Given the \emph{possibility} of estimation, one then turns toward the problem of \emph{efficient} estimation. As it turns out, the estimation routines that are guaranteed to deliver a consistent estimate often end up requiring solutions to NP-hard problems. For instance, in sparse regression, one is required to solve an $L_0$-constrained loss minimization problem which is not efficiently solvable in general \cite{Natarajan1995}. Similarly in the case of low-rank regression, the solution to a rank-constrained minimization problem is required which is again NP-hard.

%Imposing structure on the model guarantees consistent estimation. However, the obtained optimization problem is NP-hard to solve in general. For example, in case of sparse regression, the obtained problem is $L_0$-constrained loss minimization which, in general, is not efficiently solvable \cite{Natarajan1995}. Similarly, in the case of low-rank regression, the obtained problem is rank-constrained minimization which again is NP-hard. 

Interestingly, recent works have demonstrated that these hardness results can be avoided by assuming certain natural conditions over the loss function being minimized such as restricted strong convexity (RSC) and restricted strong smoothness (RSS). The estimation routines proposed in these works typically make use of convex relaxations \cite{NegahbanRWY2012} or greedy methods \cite{LiuFY2014, JalaliJR2011, ShalevSZ2010} which do not suffer from infeasibility issues.

%Interestingly, recent works have demonstrated how these hardness results can be avoided by assuming that the loss function being minimized satisfies certain natural conditions such as restricted strong convexity/smoothness (RSC/RSS). These methods typically avoid the issue of non-convex optimization via a convex relaxation \cite{NegahbanRWY2012} or by using greedy methods \cite{LiuFY2014, JalaliJR2011, ShalevSZ2010}.

Despite this, certain limitations have precluded widespread use of these techniques. Convex relaxation-based methods typically suffer from slow rates as they solve non-smooth optimization problems apart from being hard to analyze in terms of global guarantees. Greedy methods, on the other hand, are slow in situations with non-negligible sparsity or relatively high rank, owing to their incremental approach of adding/removing individual support elements.

%Both approaches, however, have limitations that restrict their widespread use. Convex relaxation based methods typically solve non-smooth optimization problems that offer relatively slow rates of convergence as well as are hard to analyze in terms of global guarantees. Greedy methods, on the other hand suffer in situations with non-negligible sparsity or relatively high rank owing to their incremental approach of adding/removing individual support elements per iteration.

%However, both approaches have issues that make them relatively less used in practice. Convex relaxation based methods typically need to solve non-smooth optimization problems and have relatively slower rate of convergence and the analysis is also fairly involved as it requires one to understand global optima of a non-smooth function. Greedy methods select/remove only one element per iteration and hence are slow when the number of non-zeros in model or the rank of the model is relatively high. 

Instead, the methods of choice for practical applications are actually projected gradient (PGD) methods, also referred to as iterative hard thresholding (IHT) methods. These methods directly project the gradient descent update onto the underlying (non-convex) feasible set. This projection can be performed efficiently for several interesting structures such as sparsity and low rank. However, traditional PGD analyses for convex problems viz. \cite{Nesterov2004} do not apply to these techniques due to the non-convex structure of the problem.

%In contrast, the methods of choice for practical applications are actually projected gradient descent (PGD) methods, also referred to as iterative hard thresholding (IHT) methods. These methods project the gradient descent update directly onto the underlying (non-convex) feasible set. This projection can be efficiently done for several interesting structures including sparsity and low rank. However, due to the non-convexity in the structure of the problem, standard PGD analysis and rates from convex optimization \cite{Nesterov2004} cannot be applied to these methods. 

An exception to this is the recent work \cite{LohW2013} that demonstrates that PGD with non-convex regularization can offer consistent estimates for certain high-dimensional problems. However, the work in \cite{LohW2013} is only able to analyze penalties such as SCAD, MCP and capped $L_1$. Moreover, their framework cannot handle commonly used penalties such as $L_0$ or low-rank constraints.

\subsection{Insufficiency of RIP based Guarantees for M-estimation} 
As noted above, PGD/IHT-style methods have been very popular in literature for sparse recovery and several algorithms including Iterative Hard Thresholding (IHT) \cite{BlumensathD2009} or GraDeS \cite{GargK2009}, Hard Thresholding Pursuit (HTP) \cite{Foucart11}, CoSaMP \cite{NeedellT08}, Subspace Pursuit (SP) \cite{DaiM09}, and OMPR($\ell$) \cite{JainTD11} have been proposed. However, the analysis of these algorithms has traditionally been restricted to settings that satisfy the Restricted Isometry property (RIP) or incoherence property. As the discussion below demonstrates, this renders these analyses inaccessible to high-dimensional statistical estimation problems.

All existing results analyzing these methods require the condition number of the loss function, restricted to sparse vectors, to be smaller than a universal constant. The best known such constant is due to the work of \cite{JainTD11} that requires a bound on the RIP constant $\delta_{2k}\leq 0.5$ (or equivalently a bound $\tfrac{1+\delta_{2k}}{1-\delta_{2k}} \le 3$ on the condition number).
%In fact, existing results show that for some methods like $L_1$ cannot solve the problem optimally beyond condition number $\kappa\geq 6$
In contrast, real-life high dimensional statistical settings, wherein pairs of variables can be arbitrarily correlated, routinely require estimation methods to perform under arbitrarily large condition numbers. In particular if two variates have a covariance matrix like ${\left[\begin{matrix}1&1-\epsilon\\1-\epsilon&1\end{matrix}\right]}$, then the restricted condition number (on a support set of size just $2$) of the sample matrix cannot be brought down below $1/\epsilon$ even with infinitely many samples. In particular when $\epsilon < 1/6$, none of the existing results for hard thresholding methods offer \emph{any} guarantees. Moreover, most of these analyses consider only the least squares objective. Although recent attempts have been made to extend this to general differentiable objectives \cite{BahmaniRB2013,YuanLZ2014}, the results continue to require that the restricted condition number be less than a universal constant and remain unsatisfactory in a statistical setting.

\subsection{Overview of Results}
Our main contribution in this work is an analysis of PGD/IHT-style methods in statistical settings. Our bounds are tight, achieve known minmax lower bounds \cite{ZhangWJ14}, and hold for arbitrary differentiable, possibly even \emph{non-convex} functions. Our results hold even when the underlying condition number is arbitrarily large and only require the function to satisfy RSC/RSS conditions. In particular, this reveals that these iterative methods are indeed applicable to statistical
settings, a result that escaped all previous works.

%Our results only require that the functions satisfy the RSC/RSS conditions which reveals that these iterative methods are indeed applicable to statistical settings, a result that escaped all previous works. In particular, our analyses hold even when the underlying condition number is arbitrarily large.%, although the sample complexity goes up in those cases.

Our first result shows that the PGD/IHT methods achieve global convergence if used with a relaxed projection step. More formally, if the optimal parameter is $\so$-sparse and the problem satisfies RSC and RSS constraints $\sc$ and $\ss$ respectively (see Section~\ref{sec:prob}), then PGD methods offer global convergence so long as they employ projection to an $s$-sparse set where $s \geq 4(\ss/\sc)^2\so$. This gives convergence rates that are identical to those of convex relaxation and greedy methods for the Gaussian sparse linear model. We then move to a family of efficient ``fully corrective'' methods and show as before, that for arbitrary functions satisfying the RSC/RSS properties, these methods offer global convergence.

%In this work, we study the $L_0$ (and low-rank) constrained PGD method for optimizing differentiable, possibly \emph{non-convex} functions, as long as they satisfy RSC/RSS conditions. These conditions are already known to hold in many statistical settings. In particular they hold even when the condition number of the population covariance matrix is arbitrarily large (this does result in more samples being needed to satisfy the conditions).

%We show that the PGD method converges to the global optima, assuming that $L_0$ norm $\s$ of the estimator is larger than $4(\ss/\sc)^2\so$, where $\so$ is the $L_0$ norm of the optimal model parameter, $\ss$ is the RSS constant and $\sc$ is the RSC constant. This implies that, for Gaussian linear $\so$-sparse model, the output of PGD, has a $O(\sqrt{\tfrac{\so \log p}{n}})$ convergence rate to the optimal parameter (in $L_2$ norm). Note that our result matches the corresponding existing results for the convex relaxation methods as well as greedy methods. We also obtain a similar result  when the data is corrupted with additive noise. As~\cite{LohW2012} showed, removing the effects of noise leads to a \emph{non-convex} empirical objective. However, our PGD analyses recovers similar guarantees as theirs because RSC/RSM conditions still hold. Note that we run PGD on an $L_0$ constrained problem whereas they run it on an $L_1$ regularized problem.

Next, we show that these results allow PGD-style methods to offer global convergence in a variety of statistical estimation problems such as sparse linear regression and low rank matrix regression. Our results effortlessly extend to the noisy setting as a corollary and give bounds similar to those of \cite{LohW2012} that relies on solving an $L_1$ regularized problem.

%However, we show that by using more non-zeros (i.e., increase $\s$), we can show that the same methods actually can solve the problem optimally and the required samples achieve the minimax lower bounds.
Our proofs are able to exploit that even though hard-thresholding is not the prox-operator for any convex prox function, it still provides strong contraction when projection is performed onto sets of sparsity $\s \gg \so$. This crucial observation allows us to provide the first unified analysis for hard thresholding based gradient descent algorithms.
%, where the projection is onto the set of $\s$-sparse vectors and $\so$ is the number of non-zeros of the comparator point. Moreover, we provide an analysis for generic hard thresholding based gradient descent type methods and cast several of the existing methods in the same framework to provide first such analysis for all of them. %when compared to the optimal contraction provided by a $\so$-sparse vector and 
%A key difference of our analysis to the results of the above mentioned hard thresholding are significantly stronger than similar analysis for compressive sensing type settings. The key difference of our results to those 
Our empirical results confirm our predictions with respect to the recovery properties of IHT-style algorithms on badly-conditioned sparse recovery problems, as well as demonstrate that these methods can be orders of magnitudes faster than their $L_1$ and greedy counterparts.
\subsection{Organization of the Paper}
Section~\ref{sec:prob} sets the notation and the problem statement. Section~\ref{sec:algo} introduces the PGD/IHT algorithm that we study and proves that the method guarantees recovery assuming the RSC/RSS property. We also generalize our guarantees to the problem of low-rank matrix regression. Section~\ref{sec:stat} then provides crisp sample complexity bounds and statistical guarantees for the PGD/IHT estimators. Section~\ref{sec:fully} extends our analysis to a broad family of ``fully-corrective'' hard thresholding methods compressive sensing algorithms that include the so-called \emph{two-stage hard thresholding} and \emph{partial hard thresholding} algorithms and provide similar results for them as well. We present some empirical results in Section~\ref{sec:exp} and conclude in Section~\ref{sec:conc}.
%\subsection{Related Works}
%high-d recovery: $L_1$, pradeep's work, shai's work
%compressive sensing: grades, ompr
%discrete graphical models
%%% Local Variables: 
%%% mode: latex
%%% TeX-master: "rsc_iht"
%%% End: 

\section{Problem Setup and Notations}
\label{sec:prob}
{\bf High-dimensional Sparse Estimation.} Given data points $X=[X_1, \dots, X_n]^T$, where $X_i\in \R^p$, and the target $Y=[Y_1, \dots, Y_n]^T$, where $Y_i \in \R$, the goal is to compute an $\so$-sparse $\bto$ s.t., 
\begin{equation}
	\bto=\arg\min_{\bt, \|\bt\|_0\leq \so}f(\bt).\label{eq:prob_sp}
\end{equation}
Typically, $f$ can be thought of as an empirical risk function i.e. $f(\bt)=\frac{1}{n}\sum_i \ell(\ip{X_i}{\bt}, Y_i)$ for some loss function $\ell$ (see examples in Section~\ref{sec:stat}). However, for our analysis of PGD and other algorithms, we need not assume any other property of $f$ other than differentiability and the following two RSC and RSS properties.% [{\bf TODO}: Can we parameterize RSC/RSS with a single parameter $s=s_1+s_2$? It seems inelegant to use $s_1,s_2$. We can then require that $\|\bt_1\|_0+\|\bt_2\|_0\leq s =\s_1+\s_2$]
\begin{definition}[RSC Property]
\label{defn:rsc}
A differentiable function $f:\R^p\rightarrow \R$ is said to satisfy restricted strong convexity (RSC) at sparsity level $s = \s_1+\s_2$ with strong convexity constraint $\sc_{s}$ if the following holds for all $\bt_1, \bt_2$ s.t. $\|\bt_1\|_0\leq \s_1$ and $\|\bt_2\|_0\leq \s_2$: 
$$f(\bt_1)-f(\bt_2)\geq \ip{\bt_1-\bt_2}{\nabla_{\bt} f(\bt_2)}+\frac{\sc_{s}}{2}\|\bt_1-\bt_2\|_2^2.$$
\end{definition}
\begin{definition}[RSS Property]
\label{defn:rss}
A differentiable function $f:\R^p\rightarrow \R$ is said to satisfy restricted strong smoothness (RSS) at sparsity level $s = \s_1+\s_2$ with strong convexity constraint $\ss_{s}$ if the following holds for all $\bt_1, \bt_2$ s.t. $\|\bt_1\|_0\leq \s_1$ and $\|\bt_2\|_0\leq \s_2$:
$$f(\bt_1)-f(\bt_2)\leq \ip{\bt_1-\bt_2}{\nabla_{\bt} f(\bt_2)}+\frac{\ss_{s}}{2}\|\bt_1-\bt_2\|_2^2.$$ 
\end{definition}
{\bf Low-rank Matrix Regression.} Low-rank matrix regression is similar to sparse estimation as presented above except that each data point is now a matrix i.e. $X_i\in \R^{p_1\times p_2}$, the goal being to estimate a low-rank matrix $\W\in \R^{p_1\times p_2}$ that minimizes the empirical loss function on the given data.% As above, the problem is given by: 
\begin{equation}
  \label{eq:prob_mat}
  \Wo=\arg\min_{\W, rank(\W)\leq r}f(\W). 
\end{equation}
For this problem the RSC and RSS properties for $f$ are defined similarly as in Definition~\ref{defn:rsc}, \ref{defn:rss} except that the $L_0$ norm is replaced by the rank function. 
%\section{Algorithms}
%We first study the projected gradient method which is used heavily in practice. 
%We then study a fully corrective version of this algorithm and in fact provide a general framework in which we can study several of the existing compressive sensing type of iterative methods and provide guarantees for them. 

\section{Iterative Hard-thresholding Method}
\label{sec:algo}
In this section we study the popular projected gradient descent (a.k.a iterative hard thresholding) method for the case of the feasible set being the set of sparse vectors (see Algorithm~\ref{algo:iht} for pseudocode). The projection operator $P_s(\z)$, can be implemented efficiently in this case by projecting $\z$ onto the set of $\s$-sparse vectors by selecting the $\s$ largest elements (in magnitude) of $\z$. The standard projection property implies that $\|P_s(\z)-\z\|_2^2\leq \|\bt'-\z\|_2^2$ for all $\|\bt'\|_0\leq \s$.  However, it turns out that we can prove a significantly stronger property of hard thresholding for the case when $\|\bt'\|_0\leq \so$ and $\so\ll \s$. This property is key to analysing IHT and is formalized below.% We formalize the above mentioned observation in the following lemma, which is a key lemma in analysing IHT. 
\begin{lemma}\label{lem:ht1}
For any index set $I$, any $\z \in \R^{I}$, let $\bt = P_s(\z)$. Then for any $\bto \in \R^{I}$ such that $\|\bto\|_0\leq \so$, we have
$$\|\bt - \z\|_2^2\leq \frac{|I|-\s}{|I|-\so}\|\bto - \z\|_2^2.$$
\end{lemma}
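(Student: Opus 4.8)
The plan is to strip away the vector structure and reduce the claim to an elementary inequality about the sorted squared coordinates of $\z$. Put $p := |I|$ and, relabeling coordinates (everything in sight is invariant under coordinate permutations), assume $I = \{1,\dots,p\}$; we may clearly also assume $\so \le s \le p$. Let $a_1 \ge a_2 \ge \dots \ge a_p \ge 0$ denote the squared entries of $\z$ listed in non-increasing order. I would first record two elementary facts. (i) Since $\bt = P_s(\z)$ keeps the $s$ entries of $\z$ of largest magnitude and agrees with $\z$ there, $\|\bt-\z\|_2^2$ is the sum of the $p-s$ smallest squared entries of $\z$, i.e.\ $\|\bt-\z\|_2^2 = \sum_{i=s+1}^{p} a_i$. (ii) For the given $\so$-sparse $\bto$, writing $S := \mathrm{supp}(\bto)$ (so $|S|\le\so$), the part of $\|\bto-\z\|_2^2$ coming from coordinates outside $S$ equals $\sum_{i\notin S} z_i^2$ while the part from $S$ is nonnegative; hence $\|\bto-\z\|_2^2 \ge \sum_{i\notin S} z_i^2$, and since $\sum_{i\in S} z_i^2$ is at most the sum of the $\so$ largest squared entries of $\z$, this gives $\|\bto-\z\|_2^2 \ge \sum_{i=\so+1}^{p} a_i$.

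Note that the lower bound in (ii) no longer involves $\bto$: this is precisely the statement that, among $\so$-sparse vectors, the best competitor for $\z$ is its own $\so$-term ``head''. Combining (i) and (ii), it then suffices to prove the scalar inequality
\[
  \sum_{i=s+1}^{p} a_i \ \le\ \frac{p-s}{p-\so}\sum_{i=\so+1}^{p} a_i
\]
(when $s=p$ both sides are $0$, so assume $s<p$; and $\so<p$ makes the fraction well defined). Writing $k := p-s$, $m := p-\so$ and $b_j := a_{\so+j}$ for $j=1,\dots,m$, the list $b_1\ge\dots\ge b_m\ge 0$ satisfies $\sum_{i=s+1}^{p}a_i = \sum_{j=m-k+1}^{m} b_j$ and $\sum_{i=\so+1}^{p} a_i = \sum_{j=1}^{m} b_j$, so the claim becomes: the sum of the $k$ smallest of the $b_j$'s is at most a $k/m$-fraction of their total. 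I would prove this by noting that for each $j\le m-k$ we have $b_j \ge b_{m-k+1} \ge \frac1k\sum_{l=m-k+1}^{m} b_l$ — the first step by monotonicity, the second because a maximum dominates an average — so summing over $j\le m-k$ gives $\sum_{j=1}^{m-k} b_j \ge \frac{m-k}{k}\sum_{l=m-k+1}^{m} b_l$; adding $\sum_{l=m-k+1}^{m} b_l$ to both sides yields $\sum_{j=1}^{m} b_j \ge \frac{m}{k}\sum_{l=m-k+1}^{m} b_l$, which rearranges to exactly the desired inequality.

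There is no deep obstacle here; the crux is simply the right framing. The one point that merits care is fact (ii) — confirming that no $\so$-sparse $\bto$ can beat the $\so$-term head of $\z$ — because once the problem is reduced to the scalar statement, that statement follows at once from monotonicity of the $a_i$'s (informally: the average of the smallest block of numbers never exceeds the average of the whole). I would also double-check the boundary cases $s=\so$ (where both bounds equal $\sum_{i>\so} a_i$ and the ratio is $1$) and $s=p$ (both sides vanish), and note that the initial relabeling of $I$ to $\{1,\dots,p\}$ is harmless since $\|\cdot\|_2$, $P_s$, and the sparsity constraints are all coordinate-permutation invariant.
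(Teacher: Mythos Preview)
Your proof is correct and follows essentially the same approach as the paper: both first reduce to the best $\so$-sparse competitor $P_{\so}(\z)$ (your fact (ii), their projection-property step) and then establish the scalar inequality for the sorted squared entries. The only cosmetic difference is in the last step: the paper directly computes $\frac{\|P_{\so}(\z)-\z\|_2^2}{|I|-\so}-\frac{\|P_s(\z)-\z\|_2^2}{|I|-s}$ and bounds it below by $\frac{s-\so}{|I|-\so}(z_s^2-z_{s+1}^2)\ge 0$, whereas you phrase the same monotonicity fact as ``the average of the smallest block is at most the overall average.''
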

%\begin{lemma}\label{lem:ht1}
%For any index set $\It$, any $\z_{\It}\in\R^{|\It|}$, let $\bt_{\It}=P_s(\z_{\It})$. Then for any $\bto_{\It}\in \R^{|\It|}$ such that $\|\bto_{\It}\|_0\leq \so$,
%$$\|\bt_{\It}-\z_{\It}\|_2^2\leq \frac{|\It|-\s}{|\It|-\so}\|\bto_{\It}-\z_{\It}\|_2^2.$$
%\end{lemma}
See Appendix~\ref{app:iht} for a detailed proof. 
\begin{algorithm}[t!]
\caption{Iterative Hard-thresholding}
  \begin{algorithmic}[1]
    \STATE {\bf Input}: Function $f$ with gradient oracle, sparsity level $s$, step-size $\eta$
    \STATE $\bt^1= \mathbf{0}$, $t = 1$
    \WHILE{ \emph{not converged} }
    \STATE $\ttn=P_s(\btt-\eta \nabla_{\bt}f(\btt))$
	\STATE $t = t+1$
    \ENDWHILE 
    \STATE {\bf Output}: $\btt$
  \end{algorithmic}\label{algo:iht}
\end{algorithm}

Our analysis combines the above observation with the RSC/RSS properties of $f$ to provide geometric convergence rates for the IHT procedure below. %Our main result is given by the below given theorem.%To demonstrate the key steps in our analysis and also to emphasise simplicity of the analysis, we first present our proof for the case when $\bto=\arg\min_{\bt, \|\bt\|_0\leq \so}f(\bt)$ is also the global optimizer, i.e, $\bto=\arg\min f(\bt)$ and consequently, $\nabla_{\bt} f(\bto)=0$.
%\begin{theorem}\label{thm:iht_simp}
%Let $f$, $\s$ be the input to Algorithm~\ref{algo:iht} [\textbf{TODO}: what about step-size $\eta$?]. Also, let the RSC parameter of $f$ is given by $\ss_{2\s+\so}(f)=\ss$ and similarly, RSC parameter is given by: $\sc_{2\s+\so}(f)=\sc$. Let $\s\geq 4\left(\frac{\ss}{\sc}\right)^2\so$.  Then, the $\tau$-th iterate of Algorithm~\ref{algo:iht} satisfies: 
%$$f(\bt^\tau)-f(\bto)\leq \left(1-\frac{\sc}{4\ss}\right)^\tau\cdot \left(f(\bt^0)-f(\bto)\right),$$
%where $\bto=\arg\min_{\bt, \|\bt\|_0\leq \so}f(\bt)$. That is, $\tau=O(\log(\frac{4\ss}{\sc}\cdot\frac{f(\bt^0)}{\epsilon})$-th iterate of Algorithm~\ref{algo:iht} satisfies: 
%$$f(\bt^\tau)-f(\bto)\leq \epsilon.$$
%\end{theorem}
\begin{theorem}\label{thm:iht_simp}
Let $f$ have RSC and RSS parameters given by $\ss_{2\s+\so}(f)=\ss$ and $\sc_{2\s+\so}(f)=\sc$ respectively. Let Algorithm~\ref{algo:iht} be invoked with $f$, $\s\geq 32\left(\frac{\ss}{\sc}\right)^2\so$ and $\eta = \frac{2}{3L}$. Also let $\bto=\arg\min_{\bt, \|\bt\|_0\leq \so}f(\bt)$. Then, the $\tau$-th iterate of Algorithm~\ref{algo:iht}, for $\tau=O(\frac{\ss}{\sc}\cdot\log(\frac{f(\bt^0)}{\epsilon}))$ satisfies:
$$f(\bt^\tau)-f(\bto)\leq \epsilon.$$
\end{theorem}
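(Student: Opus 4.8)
The plan is to establish a one-step geometric decrease of the suboptimality $\Delta_t := f(\btt)-f(\bto)$ and then unroll it: from $\Delta_{t+1}\le(1-c\,\sc/\ss)\Delta_t$ for an absolute constant $c>0$, together with the all-zeros start $\bt^1=\bm 0$ (so that $\Delta_1\le f(\bt^0)$ for a non-negative objective such as an empirical risk), one gets $\Delta_\tau\le\eps$ after $\tau=O\!\big(\tfrac{\ss}{\sc}\log(\Delta_1/\eps)\big)$ steps, which is exactly the claimed rate. A useful preliminary is the monotonicity estimate: using the ordinary projection property $\|\ttn-\z\|_2^2\le\|\btt-\z\|_2^2$ with $\z=\btt-\eta\nabla f(\btt)$, together with RSS at sparsity $2\s$ and the choice $\eta=\tfrac{2}{3\ss}$, one obtains $\ip{\ttn-\btt}{\nabla f(\btt)}\le-\tfrac1{2\eta}\|\ttn-\btt\|_2^2$ and hence $f(\ttn)\le f(\btt)-\tfrac{\ss}{4}\|\ttn-\btt\|_2^2$; in particular the $f$-values never increase, so $\Delta_t$ is non-increasing and we may assume $\Delta_t>0$ throughout.

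For the one-step bound, fix $t$ and take the support set $I:=\mathrm{supp}(\ttn)\cup\mathrm{supp}(\bto)$, so $|I|\le\s+\so$. Since the $\s$ largest coordinates of $\z$ lie in $\mathrm{supp}(\ttn)\subseteq I$, we have $\ttn=P_s(\z_I)$, where $\z_I$ is $\z$ restricted to $I$, and \Lem{ht1} gives
\[
\|\ttn-\z_I\|_2^2\ \le\ \frac{|I|-\s}{|I|-\so}\,\|\bto-\z_I\|_2^2\ \le\ \frac{\so}{\s}\,\|\bto-\z_I\|_2^2 .
\]
Crucially, by letting $I$ contain $\mathrm{supp}(\ttn)$ but \emph{not} $\mathrm{supp}(\btt)$ we keep $|I|\le\s+\so$, so the hard-thresholding step contracts by a factor $\le\so/\s$, which the hypothesis $\s\ge32(\ss/\sc)^2\so$ forces below $\tfrac1{32}(\sc/\ss)^2$. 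I would then expand both squared norms as quadratics in $\z_I=\btt|_I-\eta(\nabla f(\btt))|_I$: the common term $\eta^2\|(\nabla f(\btt))|_I\|_2^2$ cancels (it enters with the favourable sign $\so/\s-1<0$); the inner products $\ip{\ttn-\btt}{\nabla f(\btt)}$ and $\ip{\bto-\btt}{\nabla f(\btt)}$ are converted to $f$-value differences using RSS at sparsity $2\s$ (producing $f(\ttn)-f(\btt)$) and RSC at sparsity $\s+\so$ (producing $f(\bto)-f(\btt)-\tfrac{\sc}{2}\|\bto-\btt\|_2^2$), respectively; and the mismatch between the restricted and unrestricted inner products leaves a few bookkeeping terms supported on the coordinates of $\btt$ that hard-thresholding just discarded. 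The outcome is an inequality of the form
\[
\Delta_{t+1}\ \le\ \big(1-\tfrac{\so}{\s}\big)\Delta_t\ +\ \tfrac{\so}{\s}\cdot O(\ss)\,\|\bto-\btt\|_2^2\ -\ \Omega(\ss)\,\|\ttn-\btt\|_2^2\ +\ R_t ,
\]
where $R_t$ gathers the discarded-coordinate terms.

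The step I expect to be the main obstacle is controlling the cross term $\tfrac{\so}{\s}\cdot O(\ss)\,\|\bto-\btt\|_2^2$; the terms $R_t$ should be a secondary issue, handled by observing they are non-positive since a discarded index $i$ has $|\btt_i-\eta\nabla_i f(\btt)|$ below the retention threshold. To bound $\|\bto-\btt\|_2^2$, I would invoke RSC a second time, now anchored at $\bto$, and exploit its optimality: because $\nabla f(\bto)$ vanishes on $\mathrm{supp}(\bto)$, a Cauchy--Schwarz/Young splitting yields $\|\bto-\btt\|_2^2\le O(1/\sc)\,\Delta_t+O(1/\sc^2)\,\|(\nabla f(\bto))_{\mathrm{supp}(\btt)}\|_2^2$, where the last quantity measures how much adding up to $\s$ coordinates to $\mathrm{supp}(\bto)$ can decrease $f$ --- this is precisely why the guarantee is phrased relative to $f(\bto)$ rather than the unrestricted minimum, and why $\Delta_t$ may overshoot below zero (in which case monotonicity finishes the argument). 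Substituting this bound, the cross term becomes $\tfrac{\so}{\s}\cdot O(\ss/\sc)\,\Delta_t=O(\sc/\ss)\,\Delta_t$ plus a lower-order piece, and this is exactly where the \emph{square} in $\s\gtrsim(\ss/\sc)^2\so$ is forced: converting a squared distance into a function-value gap costs one power of $\ss/\sc$ (the gradient-step scale $1/\eta\asymp\ss$ against the curvature scale $\sc$), so the thresholding factor $\so/\s$ must beat $(\sc/\ss)^2$. Finally, the term $-\Omega(\ss)\|\ttn-\btt\|_2^2$, together with a gradient-domination lower bound $\|\ttn-\btt\|_2^2\gtrsim\tfrac{\sc}{\ss^2}\Delta_t$ obtained by applying RSS and RSC to the gradient step, supplies a genuine $\Omega(\sc/\ss)\Delta_t$ of progress that dominates the residual terms once the universal constant $32$ is in force, giving $\Delta_{t+1}\le(1-c\,\sc/\ss)\Delta_t$ and completing the induction.
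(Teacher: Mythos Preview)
Your overall architecture is sensible, but there are two genuine gaps that prevent the argument from closing, and both are places where the paper's proof takes a different turn.

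\textbf{The main gap: anchoring RSC at $\bto$.} You propose to control $\|\bto-\btt\|_2^2$ via RSC centred at $\bto$, which gives
\[
\|\bto-\btt\|_2^2 \ \le\ O(1/\sc)\,\Delta_t \ +\ O(1/\sc^2)\,\bigl\|(\nabla f(\bto))_{S^t\setminus S^*}\bigr\|_2^2 .
\]
The residual term here is \emph{not} lower order and cannot be dropped. In the abstract setting of the theorem, $\bto$ is only the minimizer over $\so$-sparse vectors; the first-order optimality condition forces $\nabla f(\bto)$ to vanish on $S^*$ but says nothing whatsoever about its size on $S^t\setminus S^*$. After multiplying by $(\so/\s)\cdot O(\ss)\le O(\sc^2/\ss)$ this becomes an additive $O(1/\ss)\|(\nabla f(\bto))_{S^t\setminus S^*}\|_2^2$ in the recursion, and unrolling yields a floor of order $\tfrac{1}{\sc}\max_t\|(\nabla f(\bto))_{S^t}\|_2^2$ that does not go to zero with $\tau$. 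Your parenthetical (``$\Delta_t$ may overshoot below zero, in which case monotonicity finishes'') does not rescue this: the recursion can stall with $\Delta_t$ strictly positive. The paper avoids this entirely by anchoring RSC at $\btt$, not $\bto$: its Lemma~\ref{lem:gt_sc_supp} gives
\[
\|\gt_{S^t\cup S^*}\|_2^2-\tfrac{\sc^2}{4}\|\bto-\btt\|_2^2\ \ge\ \tfrac{\sc}{2}\,\Delta_t ,
\]
so the $\|\bto-\btt\|_2^2$ term is traded directly against a gradient term at the \emph{current} iterate, and $\nabla f(\bto)$ never appears.

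\textbf{The secondary gap: gradient domination after thresholding.} You claim $\|\ttn-\btt\|_2^2\gtrsim(\sc/\ss^2)\Delta_t$ ``by applying RSS and RSC to the gradient step.'' The Polyak--{\L}ojasiewicz bound $\|\gt_{S^t\cup S^*}\|_2^2\ge 2\sc\Delta_t$ does follow from RSC at $\btt$, but $\|\ttn-\btt\|_2^2=\eta^2\|\gt_{S^{t+1}}\|_2^2+\|\btt_{S^t\setminus S^{t+1}}\|_2^2$ involves $\gt$ on $S^{t+1}$, the top-$s$ indices of $|\z^t|$ rather than of $|\gt|$; there is no automatic reason this dominates $\|\gt_{S^t\cup S^*}\|_2^2$. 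Indeed, if this step worked, the $-\tfrac{\ss}{4}\|\ttn-\btt\|_2^2$ from your monotonicity estimate would already give the full contraction without ever invoking \Lem{ht1}.

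\textbf{How the paper is organised instead.} The paper takes $I^t=S^t\cup S^{t+1}\cup S^*$ (so that $\btt$ \emph{is} supported on $I^t$ and no ``bookkeeping'' terms $R_t$ arise), applies \Lem{ht1} after carving out a carefully chosen set $R\subseteq S^t\setminus S^{t+1}$ of size $|S^{t+1}\setminus(S^t\cup S^*)|$ to bring $|I^t\setminus R|$ down to at most $s+2\so$, and then closes the recursion using Lemma~\ref{lem:gt_sc_supp}. The step you flagged as a ``secondary issue'' (comparing discarded coordinates of $\z^t$ to retained ones) is in fact one of the paper's main technical ingredients, used both to bound the extra $(1-\eta')\ip{\ttn-\btt}{\gt}$ term and to justify the set-removal trick.
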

\begin{proof}(Sketch)
%  Our proof is inspired by the proof of the GradeS method by \cite{GargK08}, but is for general function $f$ and holds under general RSC condition. 
Let $\St=supp(\btt)$, $\So=supp(\bto)$, $\Stn=supp(\ttn)$ and $\It=\So\cup \St\cup \Stn$. %Note that, $\btt, \ttn, \bto$ all have support restricted to $\It$.
Using the RSS property and the fact that $supp(\btt)\subseteq \It$ and $supp(\ttn)\subseteq \It$, we have: 
\begin{align}
  f(\ttn)-f(\btt)&\leq \ip{\ttn-\btt}{\gt}+\frac{\ss}{2}\|\ttn-\btt\|_2^2,\nonumber\\
&= \frac{\ss}{2}\|\ttn_{\It}-\btt_{\It}+\frac{2}{3\ss} \cdot \gt_{\It}\|_2^2-\frac{1}{2 \ss}\|\gt_{\It}\|_2^2,\nonumber\\
&\stackrel{\zeta_1}{\leq} \frac{\ss}{2}\cdot \frac{|\It|-\s}{|\It|-\so}\cdot\|\bto_{\It}-\btt_{\It}+\frac{1}{\ss} \cdot \gt_{\It}\|_2^2-\frac{1}{2 \ss}(\|\gt_{\It\backslash (\St \cup \So)}\|_2^2+\|\gt_{\St\cup \So}\|_2^2), \label{eq:iht_2}
\end{align}
where $\zeta_1$ follows from an application of Lemma~\ref{lem:ht1} with $I = \It$ and the Pythagoras theorem.
 %$I=(\It\backslash (\St\cup \So))\cup (\St\cup \So) = \It$. 
The above equation has three critical terms. The first term can be bounded using the RSS condition. Using $f(\btt)-f(\bto)\leq \ip{\gt_{\St\cup \So}}{\btt-\bto}-\frac{\sc}{2}\|\btt-\bto\|_2^2\leq \frac{1}{2\sc}\|\gt_{\St\cup \So}\|_2^2$ bounds the third term in \eqref{eq:iht_2}. The second term is more interesting as in general elements of $\gt_{\overline{\So}}$ can be arbitrarily small. However, elements of $\gt_{\It\backslash(\St\cup \So)}$ should be at least as large as $\gt_{\So\backslash \Stn}$ as they are selected by hard-thresholding. Combining this insight with bounds for $\gt_{\So\backslash \Stn}$ and with \eqref{eq:iht_2}, we obtain the theorem. See Appendix~\ref{app:iht} for a detailed proof. %$\It\backslash (\St\cup \So)=\Stn\backslash (\St\cup \So)$
\end{proof}

\subsection{Low-rank Matrix Regression}
% \begin{algorithm}
% \caption{Iterative Hard-thresholding for Matrix Regression}
%   \begin{algorithmic}[1]
%     \STATE {\bf Input}: $X=[\X_1, \dots, \X_n]^T$, where $\X_i \in \R^{p_1\times p_2}$, $\by\in \R^{n}$, $\s$, $\eta$
%     \STATE $\W^1=0$
%     \FORALL{$t=1\dots$}
%     \STATE $\Wtn=P_s(\Wt-\eta \nabla_{\W}f(\Wt))$ 
%     \ENDFOR 
%     \STATE Output: $\Wtn$
%   \end{algorithmic}\label{algo:iht_matrix}
% \end{algorithm}
We now generalize our previous analysis to a projected gradient descent (PGD) method for low-rank matrix regression. Formally, we study the following problem: 
\begin{equation}
  \label{eq:mr_prob}
  \min_{\W}f(\W),\ s.t.,\ rank(\W)\leq \s.
\end{equation}
The hard-thresholding projection step for low-rank matrices can be solved using SVD i.e. 
$$PM_{\s}(\W)=U_\s\Sigma_\s V_\s^T,$$
where $\W=U\Sigma V^T$ is the singular value decomposition of $\W$. $U_s, V_s$ are the top-$\s$ singular vectors (left and right, respectively) of $\W$ and $\Sigma_\s$ is the diagonal matrix of the top-$s$ singular values of $\W$. To proceed, we first note a property of the above projection similar to Lemma~\ref{lem:ht1}.
\begin{lemma}\label{lem:svd1}
Let $\W\in \R^{p_1\times p_2}$ be a rank-$|\It|$ matrix and let $p_1\geq p_2$. Then for any rank-$\so$ matrix $\Wo\in \R^{p_1\times p_2}$ we have
\begin{equation}\label{eq:PMdef}
 \|PM_{\s}(\W)-\W\|_F^2\leq \frac{|\It|-\s}{|\It|-\so}\|\Wo-\W\|_F^2.
 \end{equation}
\end{lemma}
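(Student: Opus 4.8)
The plan is to reduce this low-rank statement to the sparse-vector bound of Lemma~\ref{lem:ht1} by passing to the vector of singular values, and then to lower-bound the Frobenius distance $\norm{\Wo-\W}_F$ by the $\ell_2$ distance between the corresponding singular-value vectors.

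First I would record the exact effect of the projection. Since the Frobenius norm is unitarily invariant and $PM_{\s}(\W)=U_\s\Sigma_\s V_\s^T$ retains only the $\s$ largest singular values of $\W$, we have $\norm{PM_{\s}(\W)-\W}_F^2=\sum_{i>\s}\sigma_i(\W)^2$, where $\sigma_1(\W)\ge\sigma_2(\W)\ge\cdots$ are the singular values of $\W$ in nonincreasing order. Because $\W$ has rank $|\It|$, all singular values indexed beyond $|\It|$ vanish; so set $I=\{1,\dots,|\It|\}$ and let $\z\in\R^{I}$ be the vector of the top $|\It|$ singular values of $\W$. Then hard-thresholding $\z$ to its $\s$ largest coordinates returns $P_\s(\z)=(\sigma_1(\W),\dots,\sigma_\s(\W),0,\dots,0)$, whence $\norm{P_\s(\z)-\z}_2^2=\sum_{i>\s}\sigma_i(\W)^2=\norm{PM_{\s}(\W)-\W}_F^2$.

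Next, let $\bto\in\R^{I}$ list the singular values of $\Wo$ in nonincreasing order (padded with zeros); since $\Wo$ has rank $\so$ this vector is $\so$-sparse, and it fits inside $\R^{I}$ precisely because $\so\le|\It|$ in the regime of interest (the same implicit condition under which Lemma~\ref{lem:ht1} is nonvacuous). Applying Lemma~\ref{lem:ht1} with this $I$, this $\z$, $\bt=P_\s(\z)$ and this $\bto$ gives
\[
  \norm{PM_{\s}(\W)-\W}_F^2 \;=\; \norm{P_\s(\z)-\z}_2^2 \;\le\; \frac{|\It|-\s}{|\It|-\so}\,\norm{\bto-\z}_2^2 \;=\; \frac{|\It|-\s}{|\It|-\so}\sum_i\bigl(\sigma_i(\W)-\sigma_i(\Wo)\bigr)^2 .
\]
It then remains only to establish $\sum_i(\sigma_i(\W)-\sigma_i(\Wo))^2\le\norm{\W-\Wo}_F^2$, which is exactly Mirsky's theorem (the Hoffman--Wielandt inequality for singular values): the $\ell_2$ distance between the sorted singular-value vectors of two matrices of the same dimensions is at most their Frobenius distance. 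Substituting this into the display yields the lemma.

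The only genuinely nontrivial ingredient is this last singular-value perturbation inequality; the rest is bookkeeping. The one point that deserves care is that we restrict to an index set of size exactly $|\It|$ rather than $\min(p_1,p_2)$: this is legitimate because all nonzero singular values of both $\W$ and $\Wo$ lie within the first $|\It|$ coordinates, and it is exactly this restriction that produces the sharp factor $\frac{|\It|-\s}{|\It|-\so}$ instead of the weaker $\frac{\min(p_1,p_2)-\s}{\min(p_1,p_2)-\so}$ one would obtain by applying Lemma~\ref{lem:ht1} naively to the full singular-value vector.
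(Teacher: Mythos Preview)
Your proof is correct and follows the same route as the paper's: reduce to Lemma~\ref{lem:ht1} on the sorted singular-value vector, then bound the $\ell_2$ distance between singular-value vectors by the Frobenius distance of the matrices. The only cosmetic difference is that the paper names the last step von Neumann's trace inequality (expanding $\|\Wo-\W\|_F^2$ and applying $Tr(A^TB)\le\sum_i\sigma_i(A)\sigma_i(B)$), whereas you invoke Mirsky/Hoffman--Wielandt directly; these are the same inequality in this context.
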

\begin{proof}
Let $\W=U\Sigma V^T$ be the singular value decomposition of $\W$. Now, $\|PM_{\s}(\W)-\W\|_F^2=\sum_{i=\s+1}^{|\It|}\sigma_{i}^2=\|P_s(diag(\Sigma))-diag(\Sigma)\|_2^2$, where $\sigma_1\geq \dots\geq \sigma_{|\It|}\geq 0$ are the singular values of $\W$. Using Lemma~\ref{lem:ht1}, we get: 
\begin{equation}
  \label{eq:svd1_1}
  \|PM_{\s}(\W)-\W\|_F^2\leq \frac{|\It|-\s}{|\It|-\so}\|\Sigma^*-diag(\Sigma)\|_2^2\leq \frac{|\It|-\s}{|\It|-\so}\|\Wo-\W\|_F^2,
\end{equation}
where the last step uses the von Neumann's trace inequality ($Tr(A\cdot B)\leq \sum_i \sigma_i(A)\sigma_i(B)$). 
\end{proof}
The following result for low-rank matrix regression immediately follows from Lemma~\ref{eq:mr_prob}.
%\begin{theorem}\label{thm:iht_matrix}
%Let $f$, $\s$ be supplied to Algorithm~\ref{algo:iht}, with projection $P_s$ replaced by its matrix counterpart $PM_s$ as defined in~\eqref{eq:PMdef}. Also, let the RSC parameter of $f$ is given by $\ss_{2\s+\so}(f)=\ss$ and similarly, RSC parameter is given by: $\sc_{2\s+\so}(f)=\sc$. Let $\s\geq 4\left(\frac{\ss}{\sc}\right)^2\so$. Then, the $\tau$-th iterate of Algorithm~\ref{algo:iht} satisfies: 
%$$f(\W^\tau)-f(\Wo)\leq \left(1-\frac{\sc}{4\ss}\right)^\tau\cdot \left(f(\W^0)-f(\Wo)\right),$$
%where $\Wo=\arg\min_{\W, rank(\W)\leq \so}f(\W)$. That is, $\tau=O(\log(\frac{4\ss}{\sc}\cdot\frac{f(\W^0)}{\epsilon})$-th iterate of Algorithm~\ref{algo:iht} satisfies: 
%$$f(\W^\tau)-f(\Wo)\leq \epsilon.$$
%\end{theorem}
\begin{theorem}\label{thm:iht_matrix}
Let $f$ have RSC and RSS parameters given by $\ss_{2\s+\so}(f)=\ss$ and $\sc_{2\s+\so}(f)=\sc$. Replace the projection operator $P_s$ in Algorithm~\ref{algo:iht} with its matrix counterpart $PM_s$ as defined in~\eqref{eq:PMdef}. Suppose we invoke it with $f,\s\geq 32\left(\frac{\ss}{\sc}\right)^2\so,\eta=\frac{2}{3L}$. Also let $\Wo=\arg\min_{\W, rank(\W)\leq \so}f(\W)$. Then the $\tau$-th iterate of Algorithm~\ref{algo:iht}, for $\tau=O(\frac{\ss}{\sc}\cdot\log(\frac{f(\W^0)}{\epsilon})$ satisfies:
$$f(\W^\tau)-f(\Wo)\leq \epsilon.$$
\end{theorem}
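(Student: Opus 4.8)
\textbf{Proof proposal for Theorem~\ref{thm:iht_matrix}.}
The plan is to run the proof of Theorem~\ref{thm:iht_simp} essentially verbatim, under the dictionary: $L_0$-sparsity $\leadsto$ rank, coordinate support $\leadsto$ the pair (column space, row space), and Lemma~\ref{lem:ht1} $\leadsto$ Lemma~\ref{lem:svd1}. Fix an iterate and write $Z=\Wt-\eta\,\nabla f(\Wt)$, so $\Wtn=PM_{\s}(Z)$. Let $\mathcal{U}_L\subseteq\R^{p_1}$ be the span of the column spaces of $\Wt,\Wtn,\Wo$ and $\mathcal{U}_R\subseteq\R^{p_2}$ the span of their row spaces, and let $\mathcal{U}\subseteq\R^{p_1\times p_2}$ be the set of matrices with column space in $\mathcal{U}_L$ and row space in $\mathcal{U}_R$. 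Since $\mathrm{rank}(\Wt),\mathrm{rank}(\Wtn)\le\s$ and $\mathrm{rank}(\Wo)\le\so$, every matrix in $\mathcal{U}$ has rank at most $2\s+\so$, so $\mathcal{U}$ is the exact analogue of the index set $\It$ in the scalar proof, and RSC/RSS at level $2\s+\so$ can be applied to any two matrices lying in $\mathcal{U}$.

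The one genuinely new ingredient is the matrix replacement for the elementary identity ``$\ttn_{\It}=P_{\s}(\z_{\It})$'' used at step $\zeta_1$ of~\eqref{eq:iht_2}. Writing $Q_L,Q_R$ for the orthogonal projectors onto $\mathcal{U}_L,\mathcal{U}_R$, I would first check $PM_{\s}(Q_L Z Q_R)=PM_{\s}(Z)=\Wtn$: the top-$\s$ singular vectors of $Z$ already lie in $\mathcal{U}_L,\mathcal{U}_R$ and are fixed by $Q_L,Q_R$, while the remainder $Q_L(Z-\Wtn)Q_R$ has column and row spaces orthogonal to those of $\Wtn$ and operator norm at most $\sigma_{\s+1}(Z)\le\sigma_{\s}(Z)$, so SVD-truncation is unaffected. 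Granting this, the displayed chain in the proof of Theorem~\ref{thm:iht_simp} carries over using RSS on $\mathcal{U}$, with $\zeta_1$ now coming from Lemma~\ref{lem:svd1} applied to $Q_L Z Q_R$ inside $\mathcal{U}$ (so ``$|\It|$''$=2\s+\so$), together with the Frobenius Pythagoras identity for the orthogonal splitting of $\langle\nabla f(\Wt),\,\cdot\,\rangle_F$ over complementary subspaces of $\mathcal{U}$. The ``third term'' is bounded exactly as before by RSC at level $2\s+\so$, giving $f(\Wt)-f(\Wo)\le\frac{1}{2\sc}\,\|\Pi_{\mathcal{S}}\nabla f(\Wt)\|_F^2$ with $\mathcal{S}\subseteq\mathcal{U}$ the subspace spanned by the column/row spaces of $\Wt$ and $\Wo$; and the ``second term'' is the matrix incarnation of the fact that hard-thresholding keeps the largest coordinates. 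Assembling the three estimates and using $\s\ge 32(\ss/\sc)^2\so$ yields a per-step contraction $f(\Wtn)-f(\Wo)\le\bigl(1-\Omega(\sc/\ss)\bigr)\bigl(f(\Wt)-f(\Wo)\bigr)$, and summing the geometric series gives $\tau=O\bigl(\frac{\ss}{\sc}\log\frac{f(\W^0)}{\epsilon}\bigr)$.

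I expect the main obstacle to be precisely the ``second term''. In the scalar case one argues coordinate-wise: the entries of $\gt$ on the newly selected coordinates $\Stn\setminus(\St\cup\So)$ are at least as large in magnitude as those on $\So\setminus\Stn$, since the former were chosen by the top-$\s$ operator and the latter were not. For matrices this must become a statement comparing singular values of compressions of $\nabla f(\Wt)$ to the relevant subspaces, and it is delicate because (i) subspaces of matrix space built from row/column constraints do not split as single tensor factors, so the additive ``$\|\nabla f\|_F^2=\|\cdot\|_F^2+\|\cdot\|_F^2$'' decompositions used in the scalar proof need care, and (ii) one cannot compare entry-by-entry but must invoke Weyl's inequality / the min--max characterization of singular values and von Neumann's trace inequality (the same tool already used in Lemma~\ref{lem:svd1}) to transfer ``the selected directions dominate the missed ones.'' Once this bookkeeping is in place, everything else is a routine transcription of the argument for Theorem~\ref{thm:iht_simp}.
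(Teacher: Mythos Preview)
Your proposal is correct and takes essentially the same approach as the paper, whose own proof is a two-sentence sketch directing one to rerun the argument of Theorem~\ref{thm:iht_simp} with Lemma~\ref{lem:svd1} in place of Lemma~\ref{lem:ht1} and with subspace restrictions $\W_S = U_SU_S^T\W$ in place of coordinate restrictions $\bt_{\S}$. You are in fact more explicit than the paper in isolating the one delicate step---the matrix analogue of ``selected coordinates dominate discarded ones''---which the paper's proof does not spell out at all.
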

\begin{proof}
A proof progression similar to that of Theorem~\ref{thm:iht_simp} suffices. The only changes that need to be made are: firstly Lemma~\ref{lem:svd1} has to be invoked in place of Lemma~\ref{lem:ht1}. Secondly, in place of considering vectors restricted to a subset of coordinates viz. $\bt_{\S},\gt_{I}$, we would need to consider matrices restricted to subspaces i.e. $\W_S = U_SU_S^T\W$ where $U_S$ is a set of singular vectors spanning the range-space of $S$. 
\end{proof}

%%% Local Variables: 
%%% mode: latex
%%% TeX-master: "rsc_iht"
%%% End: 

\section{High Dimensional Statistical Estimation}
\label{sec:stat}

\def\loss{\mathcal{L}}
\def\btbar{\bar{\bt}}

This section elaborates on how the results of the previous section can be used to give guarantees for IHT-style techniques in a variety of statistical estimation problems. We will first present a generic convergence result and then specialize it to various settings. Suppose we have a sample of data points $Z_{1:n}$ and a loss function $\loss(\bt;Z_{1:n})$ that depends on a parameter $\bt$ and the sample. Then we can show the following result. (See Appendix~\ref{app:stat} for a proof.)
 %Then we have the following result. 

\begin{theorem}\label{thm:genstat}
Let $\btbar$ be any $\so$-sparse vector. Suppose $\loss(\bt;Z_{1:n})$ is differentiable and satisfies RSC and RSS at sparsity level $\s+\so$ with parameters $\sc_{\s+\so}$ and $\ss_{\s+\so}$ respectively, for $\s \ge 32\left( \tfrac{\ss_{2\s+\so}}{\sc_{2\s+\so}} \right)^2 \so$. Let $\bt^{\tau}$ be the $\tau$-th iterate of Algorithm~\ref{algo:iht} for $\tau$ chosen as in Theorem~\ref{thm:iht_simp} and $\eps$ be the function value error incurred by Algorithm~\ref{algo:iht}. Then we have
\[
\| \btbar - \bt^\tau \|_2 \le \frac{2\sqrt{\s+\so} \| \nabla \loss(\btbar;Z_{1:n}) \|_\infty}{\sc_{\s+\so}}
+ \sqrt{ \frac{2\epsilon}{\sc_{\s+\so}} } .
\]
\end{theorem}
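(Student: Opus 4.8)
The plan is to extract the estimation error bound from a single use of the RSC property, treating the distance $r := \norm{\btbar - \bt^\tau}_2$ as the unknown of a quadratic inequality. Since $\bt^\tau$ is produced by a projection onto the set of $\s$-sparse vectors it satisfies $\norm{\bt^\tau}_0 \le \s$, while $\btbar$ is $\so$-sparse, so $\btbar - \bt^\tau$ is supported on at most $\s+\so$ coordinates and RSC at sparsity level $\s+\so$ applies to the pair $(\bt^\tau,\btbar)$. Instantiating Definition~\ref{defn:rsc} with $\bt_1 = \bt^\tau$ and $\bt_2 = \btbar$ gives
\[
\loss(\bt^\tau) - \loss(\btbar) \ge \ip{\bt^\tau - \btbar}{\nabla\loss(\btbar)} + \frac{\sc_{\s+\so}}{2}\norm{\bt^\tau - \btbar}_2^2 ,
\]
and rearranging isolates $\tfrac{\sc_{\s+\so}}{2}\,r^2$ against the other two terms.

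Next I would supply the two estimates needed to control the right-hand side. First, by \Thm{iht_simp} applied to $f = \loss(\cdot;Z_{1:n})$ (whose hypotheses are precisely those assumed here, with $\tau$ chosen accordingly), the iterate satisfies $\loss(\bt^\tau) - \min_{\norm{\bt}_0 \le \so}\loss(\bt) \le \eps$; since $\btbar$ is itself $\so$-sparse and hence feasible for this minimization, $\min_{\norm{\bt}_0\le\so}\loss(\bt) \le \loss(\btbar)$, whence $\loss(\bt^\tau) - \loss(\btbar) \le \eps$. Second, because $\bt^\tau - \btbar$ has at most $\s+\so$ nonzero entries, H\"older's inequality gives $\abs{\ip{\bt^\tau - \btbar}{\nabla\loss(\btbar)}} \le \norm{\bt^\tau-\btbar}_1\,\norm{\nabla\loss(\btbar)}_\infty \le \sqrt{\s+\so}\,\norm{\nabla\loss(\btbar)}_\infty\, r$. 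Plugging both bounds into the rearranged RSC inequality yields
\[
\frac{\sc_{\s+\so}}{2}\, r^2 \le \eps + \sqrt{\s+\so}\,\norm{\nabla\loss(\btbar)}_\infty\, r .
\]

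Finally I would solve this quadratic inequality in $r$: writing $a = \norm{\nabla\loss(\btbar)}_\infty$, the inequality $\tfrac{\sc_{\s+\so}}{2}r^2 - \sqrt{\s+\so}\,a\,r - \eps \le 0$ forces $r$ below its larger root, $r \le \big(\sqrt{\s+\so}\,a + \sqrt{(\s+\so)a^2 + 2\sc_{\s+\so}\eps}\big)/\sc_{\s+\so}$, and the subadditivity $\sqrt{u+v}\le\sqrt u+\sqrt v$ collapses this to $\tfrac{2\sqrt{\s+\so}\,a}{\sc_{\s+\so}} + \sqrt{2\eps/\sc_{\s+\so}}$, which is exactly the claimed bound. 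The only real subtlety — and the step I would be most careful about — is the bookkeeping of sparsity levels: one must verify that the pair fed to RSC has joint sparsity at most $\s+\so$ (so that $\sc_{\s+\so}$ is the correct constant) while simultaneously invoking \Thm{iht_simp}, whose own convergence guarantee is phrased at the larger level $2\s+\so$; once this is lined up, everything else is the routine quadratic-formula manipulation above.
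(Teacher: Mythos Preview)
Your proposal is correct and follows essentially the same route as the paper: apply RSC at level $\s+\so$ to the pair $(\bt^\tau,\btbar)$, use \Thm{iht_simp} together with the feasibility of $\btbar$ to bound $\loss(\bt^\tau)-\loss(\btbar)\le\eps$, bound the inner product via H\"older plus the $\sqrt{\s+\so}$ sparsity factor, and solve the resulting quadratic in $r$. The paper's proof is the same argument, differing only cosmetically in that it routes the function-value comparison through the explicit $\so$-sparse minimizer $\bto$ before reaching $\loss(\btbar)$.
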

%\begin{proof} See Appendix~\ref{app:stat}.
%\end{proof}
Note that the result does \emph{not} require the loss function to be convex. This fact will be crucially used later. We now apply the above result to several statistical estimation scenarios.\\

\noindent
\textbf{Sparse Linear Regression.}
Here $Z_i = (X_i,Y_i) \in \R^p \times \R$ and $Y_i = \ip{\btbar}{X_i} + \xi_i$ where $\xi_i \sim \mathcal{N}(0,\sigma^2)$ is label noise. The empirical loss is the usual least squares loss i.e. $\loss(\bt;Z_{1:n}) = \tfrac{1}{n}\|Y - X \bt\|_2^2$. Suppose $X_{1:n}$ are drawn i.i.d. from a sub-Gaussian distribution with covariance $\Sigma$ with $\Sigma_{jj} \le 1$ for all $j$. Then \cite[Lemma 6]{AgarwalNW2012} immediately implies that RSC and RSS at sparsity level $k$ hold, with probability at least $1-e^{-c_0n}$, with $\sc_{k} = \tfrac{1}{2}\sigma_{\min}(\Sigma) - c_1 \tfrac{k \log p}{n}$ and $\ss_{k} = 2 \sigma_{\max}(\Sigma) + c_1 \tfrac{k \log p}{n}$ ($c_0,c_1$ are universal constants). So we can set $k = 2\s+\so$ and if $n > 4c_1 k \log p/\sigma_{\min}(\Sigma)$ then
we have $\sc_{k} \ge \tfrac{1}{4} \sigma_{\min}(\Sigma)$ and $\ss_{k} \le 2.25 \sigma_{\max}(\Sigma)$ which means that $L_k/9\alpha_k \le  \kappa(\Sigma) := \sigma_{\max}(\Sigma)/\sigma_{\min}(\Sigma)$. Thus it is enough to choose $s = 2592 \kappa(\Sigma)^2 \so$ and apply Theorem~\ref{thm:genstat}. Note that $\| \nabla \loss(\btbar;Z_{1:n}) \|_\infty = \| X^T \xi / n \|_\infty \le 2\sigma\sqrt{\frac{\log p}{n}}$ with probability at least
$1-c_2p^{-c_3}$ ($c_2,c_3$ are universal constants). Putting everything together, we have the following bound with high probability:
\[
\| \btbar - \bt^\tau \|_2 \le 145 \frac{\kappa(\Sigma)}{\sigma_{\min}(\Sigma)} \sigma \sqrt{\frac{\so \log p}{n}}
+ 2  \sqrt{ \frac{\epsilon}{\sigma_{\min}(\Sigma)} },
\]
where $\epsilon$ is the function value error incurred by Algorithm~\ref{algo:iht}.\\

\noindent
\textbf{Noisy and Missing Data.}
We now look at cases with feature noise as well. More specifically, assume that we only have access to $\tilde{X}_i$'s that are corrupted versions of $X_i$'s. Two models of noise are popular in literature \cite{LohW2012}: a) (\emph{additive noise}) $\tilde{X}_i = X_i + W_i$ where $W_i \sim \mathcal{N}(\mathbf{0},\Sigma_W)$, and b) (\emph{missing data}) $\tilde{X}$ is an $\R \cup \{\star\}$-valued matrix obtained by independently, with probability $\nu\in [0,1)$, replacing each entry in $X$ with $\star$. For the case of additive noise (missing data can be handled similarly), $Z_i = (\tilde{X}_i,Y_i)$ and $\loss(\bt;Z_{1:n}) = \tfrac{1}{2}\bt^T \hat{\Gamma} \bt - \hat{\gamma}^T \bt$ where $\hat{\Gamma} = \tilde{X}^T \tilde{X}/n - \Sigma_W$ and $\hat{\gamma} = \tilde{X}^T Y/n$ are unbiased estimators of $\Sigma$ and $\Sigma^T \btbar$ respectively. \cite[Appendix A, Lemma 1]{LohW2012} implies that RSC, RSS at sparsity level $k$ hold, with failure probability exponentially small in $n$, with $\sc_{k} = \tfrac{1}{2}\sigma_{\min}(\Sigma) - k \tau(p)/n$ and $\ss_{k} = 1.5\sigma_{\max}(\Sigma) + k \tau(p)/n$ for $\tau(p) = c_0 \sigma_{\min}(\Sigma) \max( \tfrac{(\|\Sigma\|_{\mathrm{op}}^2 + \|\Sigma_W\|_{\mathrm{op}}^2)^2}{\sigma^2_{\min}(\Sigma)}, 1 ) \log p$.
Thus for $k = 2\s+\so$ and $n \ge 4k\tau(p)/\sigma_{\min}(\Sigma)$ we have $L_k/\alpha_k \le 7\kappa(\Sigma)$. Note that $\loss(\cdot; Z_{1:n})$ is \emph{non-convex} but we can still apply Theorem~\ref{thm:genstat} with $\s = 1568 \kappa(\Sigma)^2 \so$ because RSC, RSS hold. Using the high probability upper bound (see \cite[Appendix A, Lemma 2]{LohW2012}) $\| \nabla \loss(\btbar;Z_{1:n}) \|_\infty \le c_1 \tilde{\sigma} \| \btbar \|_2 \sqrt{\log p/n}$ gives us the following 
\[
\| \btbar - \bt^\tau \|_2 \le c_2 \frac{\kappa(\Sigma)}{\sigma_{\min}(\Sigma)} \tilde{\sigma} \| \btbar \|_2 \sqrt{\frac{\so \log p}{n}}
+ 2  \sqrt{ \frac{\epsilon}{\sigma_{\min}(\Sigma)} }
\]
where $\tilde{\sigma} = \sqrt{\|\Sigma_W\|^2_{\mathrm{op}} + \|\Sigma\|^2_{\mathrm{op}} }(\|\Sigma_W\|_{\mathrm{op}} + \sigma)$ and $\epsilon$ is the function value error in Algorithm~\ref{algo:iht}.

%\noindent
%\textbf{Generalized Linear Models.}
%Here $Z_i = (X_i,Y_i) \in \R^p \times \R$ are the samples and $\mathbb{P}(y|x;\btbar) \propto \exp\left( y \ip{\btbar}{ x} - \psi(\btbar^T x) \right)$ is an exponential family distribution such that $\psi''$ is uniformly upper bounded. The negative (conditional) log-likelihood $\loss(\bt;Z_{1:n}) = -\tfrac{1}{n} \sum_{i=1}^n \log \mathbb{P}(Y_i|X_i;\bt)$ is our loss. Assuming $\|\btbar\|_0 \le \so$, it is easy to see that this formulation includes high dimensional sparse linear, logistic, and multinomial (but not Poisson) regressions as special cases. Because RSC, RSS conditions have already been established for GLMs (see Proposition 1 in \cite{LohW2013}), it should be possible to extend our sparse linear regression results above to the sparse GLM case.

\section{Fully-corrective Methods}
\label{sec:fully}
In this section, we study a variety of ``fully-corrective" methods. These methods keep the optimization objective fully minimized over the support of the current iterate. To this end, we first prove a fundamental theorem for fully-corrective methods that formalizes the intuition that for such methods, a large function value should imply a large gradient at any sparse $\bt$ as well. This result is similar to Lemma 1 of \cite{JainTD11} but holds under RSC/RSS conditions (rather than the RIP condition as in \cite{JainTD11}), as well as for the general loss functions.

\begin{lemma}\label{lem:full_1}
Consider a function $f$ with RSC parameter given by $\ss_{2\s+\so}(f)=\ss$ and RSS parameter given by $\sc_{2\s+\so}(f)=\sc$. Let $\bto=\arg\min_{\bt, \|\bt\|_0\leq \so} f(\bt)$ with $\So=supp(\bto)$. Let $\St\subseteq [p]$ be any subset of co-ordinates s.t. $|\St|\leq \s$. Let $\btt=\arg\min_{\bt, supp(\bt)\subseteq \St}f(\bt)$. Then, we have: 
$$2\sc(f(\btt)-f(\bto))\leq \|\gt_{\St\cup \So}\|_2^2-\sc^2\|\btt_{\St\backslash \So}\|_2^2$$
\end{lemma}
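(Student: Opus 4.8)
The plan is to exploit the first-order optimality of $\btt$ together with the RSC inequality; the RSS constant $\ss$ in fact plays no role, only restricted strong convexity (constant $\sc$) is needed. Since $\btt$ minimizes the differentiable function $f$ over the linear subspace $\{\bt : supp(\bt)\subseteq\St\}$, the gradient $\gt = \nabla f(\btt)$ must be orthogonal to that subspace, i.e. $\gt_{\St} = \mathbf{0}$. In particular $\|\gt_{\St\cup\So}\|_2^2 = \|\gt_{\So\backslash\St}\|_2^2$, so it suffices to establish $2\sc(f(\btt)-f(\bto)) \le \|\gt_{\So\backslash\St}\|_2^2 - \sc^2\|\btt_{\St\backslash\So}\|_2^2$.

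Next I would invoke the RSC property (Definition~\ref{defn:rsc}) with $\bt_1 = \bto$ and $\bt_2 = \btt$; as $\|\bto\|_0\le\so$ and $\|\btt\|_0\le\s$, the relevant sparsity level is $\s+\so\le 2\s+\so$, so the inequality holds with constant $\sc$, and rearranging gives
\[
f(\btt)-f(\bto) \le \ip{\btt-\bto}{\gt} - \frac{\sc}{2}\|\btt-\bto\|_2^2 .
\]
For the cross term, the key point is that $supp(\btt-\bto)\subseteq\St\cup\So$ while $\gt_{\St}=\mathbf{0}$, so only coordinates in $\So\backslash\St$ contribute, and there $\btt$ vanishes; hence $\ip{\btt-\bto}{\gt} = -\ip{\bto_{\So\backslash\St}}{\gt_{\So\backslash\St}} \le \|\bto_{\So\backslash\St}\|_2\,\|\gt_{\So\backslash\St}\|_2$ by Cauchy--Schwarz. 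For the quadratic term, splitting the coordinates into the disjoint blocks $\St\backslash\So$, $\St\cap\So$, $\So\backslash\St$ and discarding the middle block (and using $supp(\bto)\subseteq\So$, $supp(\btt)\subseteq\St$) yields $\|\btt-\bto\|_2^2 \ge \|\btt_{\St\backslash\So}\|_2^2 + \|\bto_{\So\backslash\St}\|_2^2$.

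Combining, $f(\btt)-f(\bto) \le \|\bto_{\So\backslash\St}\|_2\,\|\gt_{\So\backslash\St}\|_2 - \tfrac{\sc}{2}\|\bto_{\So\backslash\St}\|_2^2 - \tfrac{\sc}{2}\|\btt_{\St\backslash\So}\|_2^2$, and completing the square in $\|\bto_{\So\backslash\St}\|_2$ (equivalently using $ab-\tfrac{\sc}{2}a^2\le\tfrac{b^2}{2\sc}$) removes the $\bto$ term to give $f(\btt)-f(\bto)\le \tfrac{1}{2\sc}\|\gt_{\So\backslash\St}\|_2^2 - \tfrac{\sc}{2}\|\btt_{\St\backslash\So}\|_2^2$; multiplying by $2\sc$ and recalling $\|\gt_{\So\backslash\St}\|_2^2=\|\gt_{\St\cup\So}\|_2^2$ finishes the proof. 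I do not anticipate a real obstacle; the only step needing care is the coordinate bookkeeping in the cross term and in the lower bound on $\|\btt-\bto\|_2^2$, both of which hinge on $\gt_{\St}=\mathbf{0}$ and the support containments.
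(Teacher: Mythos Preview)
Your proof is correct and follows essentially the same approach as the paper's: both use the first-order condition $\gt_{\St}=\mathbf 0$, the RSC inequality at $\btt$, the support decomposition $\|\btt-\bto\|_2^2\ge\|\btt_{\St\backslash\So}\|_2^2+\|\bto_{\So\backslash\St}\|_2^2$, and then eliminate $\|\bto_{\So\backslash\St}\|_2$ via a quadratic/AM--GM step. The only cosmetic difference is that the paper packages the last step as nonnegativity of $\|\bto_{\So\backslash\St}+\gamma\gt_{\So\backslash\St}\|_2^2$ with a free parameter $\gamma\ge 1/\sc$, whereas you apply Cauchy--Schwarz and complete the square, which amounts to the special case $\gamma=1/\sc$; the paper's parameterized form is later reused in the analysis of partial hard thresholding, but for the lemma as stated your direct route is entirely adequate.
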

%\begin{lemma}\label{lem:full_1}
%Let the RSC parameter of $f$ be given by $\ss_{2\s+\so}(f)=\ss$ and similarly, RSC parameter is given by $\sc_{2\s+\so}(f)=\sc$. Let $\bto=\arg\min_{\bt, \|\bt\|_0\leq \so} f(\bt)$ and let $\So=supp(\bto)$. Let $\gamma\geq 0$ be any constant s.t. $\gamma\geq \frac{1}{\sc}$. Let $\St\subseteq [p]$ be any subset of co-ordinates s.t. $|\St|\leq \s$. Let $\btt=\arg\min_{\bt, supp(\bt)\subseteq \St}f(\bt)$. Then, the following holds: 
%$$2\gamma(f(\btt)-f(\bto))\leq 2\gamma\left(f(\btt)-f(\bto)+\frac{\sc}{2}\cdot\left(1-\frac{1}{\sc\gamma}\right)\|\btt-\bto\|_2^2\right)\leq \gamma^2\|\gt_{\St\cup \So}\|_2^2-\|\btt_{\St\backslash \So}\|_2^2,$$
%where $\gt=\nabla_{\bt} f(\btt)$. In particular, selecting $\gamma=\frac{1}{\sc}$, we have: 
%$$2\sc(f(\btt)-f(\bto))\leq \|\gt_{\St\cup \So}\|_2^2-\sc^2\|\btt_{\St\backslash \So}\|_2^2$$
%\end{lemma}
See Appendix~\ref{app:fully} for a detailed proof.

\begin{algorithm}[t!]
\caption{Two-stage Hard-thresholding}
  \begin{algorithmic}[1]
    \STATE {\bf Input}: function $f$ with gradient oracle, sparsity level $\s$, sparsity expansion level $\ell$
    \STATE $\bt^1=0$, $t=1$
    \WHILE{\emph{not converged}}
    \STATE $\gt=\nabla_{\bt}f(\btt)$, $\St=supp(\btt)$
    \STATE $Z^t=\St\cup (\text{largest } \ell \text{ elements of }|\gt_{\overline{\St}}|)$
    \STATE $\bbt=\arg\min_{\bm{\beta}, supp(\beta)\subseteq Z^t}f(\bm{\beta})$ \hfill // fully corrective step%
    \STATE $\widetilde{\bt}^t=P_\s(\bbt)$
    \STATE $\ttn=\arg\min_{\bt, supp(\bt)\subseteq supp(\widetilde{\bt}^t)}f(\bt)$ \hfill // fully corrective step
    \STATE $t=t+1$
	\ENDWHILE 
    \STATE {\bf Output}: $\btt$
  \end{algorithmic}\label{algo:tstage}
\end{algorithm}

\subsection{Two-stage Hard Thresholding Methods}
Here we will concentrate on a family of two-stage fully corrective methods that contains popular compressive sensing algorithms like CoSaMP and Subspace Pursuit (see Algorithm~\ref{algo:tstage} for pseudocode). These algorithms have thus far been analyzed only under RIP conditions for the least squares objective. Using our analysis framework developed in the previous sections, we present a generic RSC/RSS-based analysis for general two-stage methods for arbitrary loss functions. Our analysis shall use the following key observation that the the hard thresholding step in two stage methods does not increase the objective function a lot.

%In this section, we study the family of two-stage methods which contains  Both these algorithms have been analyzed for least squares objective under RIP guarantees. Here, using an analysis similar to the one developed in the last section, we present a generic RSC based analysis for the general two-stage family (for optimizing any function $f$). 

\begin{lemma}\label{lem:ht2} Let $Z_t\subseteq [n]$ and $|Z_t|\leq \sa$. Let $\bbt=\arg\min_{\bb, supp(\bb)\subseteq Z_t} f(\bb)$ and $\bwt=P_\sa(\bbt)$. Then, the following holds: 
$$f(\bwt)-f(\bbt)\leq \frac{\ss}{\sc}\cdot \frac{\ell }{\s+\ell-\so }\cdot(f(\bbt)-f(\bto)).$$
\end{lemma}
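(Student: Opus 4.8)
\textbf{Proof plan for Lemma~\ref{lem:ht2}.}

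The plan is to compare $f(\bwt)$ against $f(\bbt)$ by exploiting that $\bwt = P_\sa(\bbt)$ keeps the $\sa$ largest-magnitude entries of $\bbt$, hence discards the $|Z_t|-\sa = \ell$ smallest ones, and to then turn the resulting $\ell_2$-distance bound into a function-value bound using RSS, and finally to lower-bound that same distance in terms of $f(\bbt)-f(\bto)$ using Lemma~\ref{lem:full_1} (the fully-corrective inequality). First I would set $I = Z_t$ (so $|I| \le \sa = \s+\ell$ by the hypothesis; I will write the bound as if $|I| = \s + \ell$, the worst case) and apply the RSS property at sparsity level $|I| \le 2\s+\so$ — note $\bbt$ and $\bwt$ are both supported in $I$, and $\bwt - \bbt$ is supported on the $\ell$ dropped coordinates. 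Since $\bbt = \arg\min_{\text{supp} \subseteq Z_t} f$, its gradient restricted to $Z_t$ vanishes, so $\ip{\bwt - \bbt}{\nabla f(\bbt)} = 0$, and RSS gives
\[
f(\bwt) - f(\bbt) \le \frac{\ss}{2}\,\|\bwt - \bbt\|_2^2 = \frac{\ss}{2}\,\|\bbt_{Z_t \setminus \text{supp}(\bwt)}\|_2^2,
\]
i.e. $f(\bwt)-f(\bbt)$ is controlled by the sum of squares of the $\ell$ smallest entries of $\bbt$.

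Next I would bound those $\ell$ smallest squared entries by an averaging argument: the $\ell$ smallest of the $|Z_t|$ coordinates of $\bbt$ each have squared magnitude at most the average over any set of $|Z_t|-\ell = \s$ of the remaining coordinates; more carefully, $\|\bbt_{Z_t\setminus\text{supp}(\bwt)}\|_2^2 \le \frac{\ell}{\s+\ell-\so}\|\bbt_{Z_t \setminus \So}\|_2^2$. The cleanest way to see this: the $\ell$ discarded coordinates all lie outside $\text{supp}(\bwt)$, which has size $\s$, so they are among the $|Z_t|-\s=\ell$ smallest; removing the (at most $\so$) coordinates in $\So$ from the large set leaves at least $\s-\so$ coordinates each no smaller in magnitude than any discarded one, so the discarded $\ell$ coordinates have average squared value at most $\frac{1}{\s-\so}\|\bbt_{Z_t\setminus(\So\cup\text{supp}(\bwt))}\|_2^2 \le \frac{1}{\s-\so}\|\bbt_{Z_t\setminus\So}\|_2^2$, whence $\|\bbt_{Z_t\setminus\text{supp}(\bwt)}\|_2^2 \le \frac{\ell}{\s-\so}\|\bbt_{Z_t\setminus\So}\|_2^2 \le \frac{\ell}{\s+\ell-\so}\|\bbt_{Z_t\setminus\So}\|_2^2$ (the last step only if we are content with the weaker denominator $\s+\ell-\so$, which matches the statement; a slightly sharper count comparing all $|Z_t|-\so$ surviving large coordinates yields exactly $\frac{\ell}{\s+\ell-\so}$). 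I would pick whichever counting gives the stated denominator $\s+\ell-\so$ directly.

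Finally, Lemma~\ref{lem:full_1} applied with support set $\St := Z_t$ (of size $\le \sa = \s+\ell \le \s$? — here one must be careful: Lemma~\ref{lem:full_1} as stated requires $|\St|\le \s$, so I would invoke the natural generalization of Lemma~\ref{lem:full_1} to support sets of size $\le \sa$, which holds verbatim with the same proof since only RSC/RSS at level $|\St|+\so$ is used) gives $2\sc\,(f(\bbt)-f(\bto)) \le \|\nabla f(\bbt)_{Z_t\cup\So}\|_2^2 - \sc^2\|\bbt_{Z_t\setminus\So}\|_2^2 \le \|\nabla f(\bbt)_{Z_t\cup\So}\|_2^2$; but more to the point, its proof actually yields the reverse-direction estimate I need, namely a lower bound $\|\bbt_{Z_t\setminus\So}\|_2^2 \ge \frac{?}{\ss}(f(\bbt)-f(\bto))$ is \emph{not} what Lemma~\ref{lem:full_1} provides. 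So instead I would argue directly: since $\nabla f(\bbt)$ vanishes on $Z_t \supseteq \St$, RSC at level $|Z_t\cup\So|$ between $\bbt$ and $\bto$ gives $f(\bto) - f(\bbt) \ge \ip{\bto-\bbt}{\nabla f(\bbt)} + \frac{\sc}{2}\|\bto-\bbt\|_2^2 = \ip{(\bto-\bbt)_{\So\setminus Z_t}}{\nabla f(\bbt)_{\So\setminus Z_t}} + \frac{\sc}{2}\|\bto - \bbt\|_2^2$, and combining with the RSS inequality $f(\bbt) - f(\bto) \le \ip{\bbt-\bto}{\nabla f(\bto)} + \frac{\ss}{2}\|\bbt-\bto\|_2^2$ together with optimality of $\bto$ over its support... — the honest route, and the one the authors surely intend, is: apply Lemma~\ref{lem:full_1} (generalized to $|\St| \le \sa$) to get $\sc^2\|\bbt_{Z_t\setminus\So}\|_2^2 \le \|\nabla f(\bbt)_{Z_t\cup\So}\|_2^2 - 2\sc(f(\bbt)-f(\bto))$, which is the wrong sign, so what I actually want is a lower bound on $\|\bbt_{Z_t\setminus\So}\|_2^2$; this comes from RSC applied to $\bbt$ and the projection $\bbt_{\So\cap Z_t}$ of $\bbt$ onto coordinates in $\So$, or more simply from the observation that $f(\bbt) \le f(\bt')$ for every $\bt'$ supported in $Z_t$, in particular for $\bt' = \bbt_{\So}$ (the restriction of $\bbt$ to $\So$, valid since $\So \cap Z_t \subseteq Z_t$... but $\So$ need not be inside $Z_t$). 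I would therefore take $\bt'$ to be $\bbt$ with its $Z_t\setminus\So$ coordinates zeroed out — legal since $\text{supp}(\bt')\subseteq Z_t\cap\So\subseteq Z_t$ — and use RSC at level $\le 2\s+\so$: $f(\bt')-f(\bbt) \le \ip{\bt'-\bbt}{\nabla f(\bbt)} + \frac{\ss}{2}\|\bt'-\bbt\|_2^2 = \frac{\ss}{2}\|\bbt_{Z_t\setminus\So}\|_2^2$ and $f(\bto) \le f(\bt')$ by optimality of $\bto$ over $\so$-sparse vectors only if $\bt'$ is $\so$-sparse — which it is, being supported on $Z_t\cap\So\subseteq\So$. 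Hence $f(\bbt)-f(\bto) \ge f(\bbt)-f(\bt') \ge -\frac{\ss}{2}\|\bbt_{Z_t\setminus\So}\|_2^2$... still wrong sign. The sign works out once RSC (not RSS) is used for the $f(\bt')-f(\bbt)$ step combined with $\nabla f(\bbt)_{Z_t}=0$: RSC gives $f(\bt') - f(\bbt) \ge \frac{\sc}{2}\|\bbt_{Z_t\setminus\So}\|_2^2$, so $\|\bbt_{Z_t\setminus\So}\|_2^2 \le \frac{2}{\sc}(f(\bt')-f(\bbt)) \le \frac{2}{\sc}(f(\bto)-f(\bbt))$ — which forces $f(\bbt)\le f(\bto)$, false in general, so the correct reading is $f(\bt')\ge f(\bto)$ only, giving nothing. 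I would resolve this by using Lemma~\ref{lem:full_1} honestly in the \emph{other} direction: it is proved via $\sc^2\|\bbt_{Z_t\setminus\So}\|_2^2 \le \|\nabla f(\bbt)_{Z_t\cup\So}\|_2^2 - 2\sc(f(\bbt)-f(\bto))$, equivalently $2\sc(f(\bbt)-f(\bto)) + \sc^2\|\bbt_{Z_t\setminus\So}\|_2^2 \le \|\nabla f(\bbt)_{\So\setminus Z_t}\|_2^2$ (using $\nabla f(\bbt)_{Z_t}=0$), and then bounding $\|\nabla f(\bbt)_{\So\setminus Z_t}\|_2^2$ in terms of $\sc\cdot(f(\bbt)-f(\bto))$ and $\|\bbt_{Z_t\setminus\So}\|_2^2$ via RSC/RSS between $\bbt$ and $\bto$; rearranging yields $\|\bbt_{Z_t\setminus\So}\|_2^2 \ge c(f(\bbt)-f(\bto))/\ss$ for the appropriate constant, which plugged into the RSS bound from the first paragraph and the averaging bound from the second gives exactly
\[
f(\bwt)-f(\bbt) \le \frac{\ss}{2}\cdot\frac{\ell}{\s+\ell-\so}\cdot\|\bbt_{Z_t\setminus\So}\|_2^2 \le \frac{\ss}{\sc}\cdot\frac{\ell}{\s+\ell-\so}\cdot(f(\bbt)-f(\bto)).
\]

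\textbf{Main obstacle.} The crux — and where I expect to spend the most care — is the last step: cleanly establishing the lower bound $\|\bbt_{Z_t\setminus\So}\|_2^2 \gtrsim (f(\bbt)-f(\bto))/\ss$ with exactly the constant that makes the $\ss/\sc$ ratio come out as stated, i.e. correctly combining the first-order optimality $\nabla f(\bbt)_{Z_t}=0$ with RSC (applied to the pair $\bbt,\bto$) and Lemma~\ref{lem:full_1}, keeping all signs straight. The RSS step and the combinatorial averaging step are routine; the sign bookkeeping in tying $f(\bbt)-f(\bto)$ to the dropped mass $\|\bbt_{Z_t\setminus\So}\|_2^2$ is the delicate part, and it is essentially a re-use of the argument inside the proof of Lemma~\ref{lem:full_1} rather than a black-box application of its statement.
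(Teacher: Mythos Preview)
Your first step (RSS at $\bbt$ together with $\nabla f(\bbt)|_{Z_t}=0$, giving $f(\bwt)-f(\bbt)\le \tfrac{L}{2}\|\bwt-\bbt\|_2^2$) is exactly the paper's opening move. For the combinatorial step the paper is cleaner than your hand-rolled averaging: it applies Lemma~\ref{lem:ht1} directly to $\bbt$ on the index set $I=Z_t$ (with $|I|\le s+\ell$), obtaining $\|\bwt-\bbt\|_2^2\le \frac{\ell}{s+\ell-s^*}\|w-\bbt\|_2^2$ for \emph{any} $s^*$-sparse $w$ supported in $Z_t$. Your bound is the special case $w=\bbt|_{Z_t\cap S^*}$ (since then $\|w-\bbt\|_2^2=\|\bbt_{Z_t\setminus S^*}\|_2^2$), so you reach the same destination, but Lemma~\ref{lem:ht1} delivers it in one line without the separate counting.

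The genuine gap in your plan is the step you yourself flag as the ``main obstacle'': bounding $\|w-\bbt\|_2^2$ (equivalently your $\|\bbt_{Z_t\setminus S^*}\|_2^2$) above by $\tfrac{2}{\alpha}(f(\bbt)-f(\bto))$. You try several routes, each time hitting the wrong sign, and ultimately defer to an unspecified ``re-use of the argument inside Lemma~\ref{lem:full_1}'' without carrying it out; as written the plan does not close. For comparison, the paper dispatches this in one line (its equation~\eqref{eq:ht2_2}), asserting $\tfrac{\alpha}{2}\|w-\bbt\|_2^2\le f(\bbt)-f(w)\le f(\bbt)-f(\bto)$, the second inequality holding because $w$ is $s^*$-sparse so $f(w)\ge f(\bto)$. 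However, RSC at $\bbt$ with $w$ supported in $Z_t$ and $\nabla f(\bbt)|_{Z_t}=0$ actually yields $f(w)-f(\bbt)\ge \tfrac{\alpha}{2}\|w-\bbt\|_2^2$, i.e., the \emph{opposite} direction from the first inequality claimed---precisely the sign obstruction you kept running into. So your instinct that this is the delicate point is correct; the paper's printed argument glosses over the same sign issue rather than resolving it.
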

\begin{proof}
Let $\vt=\nabla_{\bt} f(\bbt)$. Then, using the RSS property we get:
  \begin{align}
    f(\bwt)-f(\bbt)&\leq \ip{\bwt-\bbt}{\vt}+\frac{\ss}{2}\|\bwt-\bbt\|_2^2\stackrel{\zeta_1}{=}\frac{\ss}{2}\|\bwt-\bbt\|_2^2\stackrel{\zeta_2}{\leq} \frac{\ss}{2} \frac{|\ell |}{|\s+\ell-\so |}\|w-\bbt\|_2^2,\label{eq:ht2_1}
%&=\frac{\ss}{2}\|\bbt_{Z_t\backslash supp(\bwt)}\|_2^2\stackrel{\zeta_2}{\leq}\frac{\ss}{2}\|\bbt_{Z_t\backslash \So}\|_2^2,
  \end{align}
where $w$ is any vector such that $w_{\overline{Z_t}}=0$ and $\|w\|_0\leq \so$.  $\zeta_1$ follows by observing $\vt_{Z_t}=0$ and by noting that $supp(\bwt)\subseteq Z_t$. $\zeta_2$ follows by Lemma~\ref{lem:ht1} and the fact that $\|w\|_0\leq \so$. Now, using RSS property and the fact that $\nabla_{\bt}f(\bbt)=0$, we have: 
\begin{align}
\frac{\sc}{2}\|w-\bbt\|_2^2\leq f(\bbt)-f(w)\leq f(\bbt)-f(\bto). 
  \label{eq:ht2_2}
\end{align}
The result now follows by combining \eqref{eq:ht2_1} and \eqref{eq:ht2_2}. 
\end{proof}

%\begin{theorem}
%Let $f$, $\s$, $\ell\geq \so$ be supplied to Algorithm~\ref{algo:tstage}. Also, let the RSC parameter of $f$ is given by $\ss_{2\s+\ell}(f)=\ss$ and similarly, RSC parameter is given by: $\sc_{2\s+\so}(f)=\sc$. Also, let $\s\geq 4\frac{\ss^2}{\sc^2}\ell+\so-\ell\geq 4\frac{\ss^2}{\sc^2}\so$. Then, the $\tau$-th iterate of Algorithm~\ref{algo:tstage} satisfies: 
%$$f(\bt^\tau)-f(\bto)\leq \left(1-\frac{\sc^2}{\ss^2}\right)^\tau\cdot \left(f(\bt^0)-f(\bto)\right),$$
%where $\bto=\arg\min_{\bt, \|\bt\|_0\leq \so}f(\bt)$. That is, $\tau=O\left(\frac{4\ss}{\sc}\cdot\log(\frac{f(\bt^0)}{\epsilon})\right)$-th iterate of ALgorithm~\ref{algo:pht} satisfies: 
%$$f(\bt^\tau)-f(\bto)\leq \epsilon.$$\label{thm:tstage} 
%\end{theorem}
\begin{theorem}
\label{thm:tstage}
Let $f$ has RSC and RSS parameters given by $\sc_{2\s+\so}(f)=\sc$ and $\ss_{2\s+\ell}(f)=\ss$ resp. Call Algorithm~\ref{algo:tstage} with $f$, $\ell\geq \so$ and $\s\geq 4\frac{\ss^2}{\sc^2}\ell+\so-\ell\geq 4\frac{\ss^2}{\sc^2}\so$. Also let $\bto=\arg\min_{\bt, \|\bt\|_0\leq \so}f(\bt)$. Then, the $\tau$-th iterate of Algorithm~\ref{algo:tstage}, for $\tau = O(\frac{\ss}{\sc}\cdot\log(\frac{f(\bt^0)}{\epsilon})$ satisfies:
$$f(\bt^\tau)-f(\bto)\leq \epsilon.$$
\end{theorem}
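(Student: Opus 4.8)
The plan is to follow the same progression as the proof sketch for \Thm{iht_simp}, but replacing the role of the RSS-based ``one gradient step'' bound with the hard-thresholding bound from \Lem{ht2} and the fully-corrective optimality from \Lem{full_1}. First I would set up notation for a single iteration: let $\St = supp(\btt)$, let $Z^t = \St \cup (\text{top-}\ell\text{ of } |\gt_{\overline{\St}}|)$ as in the algorithm, let $\bbt = \arg\min_{supp(\bb)\subseteq Z^t} f(\bb)$, let $\bwt = P_\s(\bbt)$, and let $\ttn = \arg\min_{supp(\bt)\subseteq supp(\bwt)} f(\bt)$. The goal for the iteration is to show a geometric decrease $f(\ttn) - f(\bto) \le \rho\,(f(\btt) - f(\bto))$ for some $\rho < 1$ depending on $\ss/\sc$, $\ell$, $\s$, $\so$; iterating this $\tau = O(\frac{\ss}{\sc}\log(f(\bt^0)/\eps))$ times then gives the claim (note $f(\ttn) \le f(\bbt) \le f(\btt)$ by the fully-corrective minimizations, so monotonicity is automatic and only the contraction factor needs care).

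The key chain of inequalities I would establish is: (i) $f(\ttn) \le f(\bwt)$, since $\ttn$ fully minimizes over a superset of $supp(\bwt)$'s support; (ii) $f(\bwt) - f(\bto) = (f(\bwt) - f(\bbt)) + (f(\bbt) - f(\bto)) \le \left(1 + \frac{\ss}{\sc}\cdot\frac{\ell}{\s+\ell-\so}\right)(f(\bbt) - f(\bto))$ by \Lem{ht2}; (iii) bound $f(\bbt) - f(\bto)$ in terms of $f(\btt) - f(\bto)$. Step (iii) is where the real work lies: since $\bbt$ minimizes $f$ over $supp(\bt)\subseteq Z^t$, I would apply \Lem{full_1} with the support set $Z^t$ (which has size $\le \s + \ell$) in place of $\St$, giving $2\sc(f(\bbt) - f(\bto)) \le \|\gt(\bbt)_{Z^t \cup \So}\|_2^2 - \sc^2\|\bbt_{Z^t\setminus\So}\|_2^2$ — but actually the cleaner route is to exploit the greedy selection of $Z^t$: because $Z^t$ contains the $\ell \ge \so$ largest coordinates of $\gt_{\overline{\St}}$ where $\gt = \nabla f(\btt)$, every coordinate of $\gt$ on $\So \setminus \St \setminus Z^t$ is zero (or rather, is dominated), so $\gt$ restricted to $\So \setminus Z^t$ is small. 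Combining with the RSC lower bound $f(\btt) - f(\bto) \ge \frac{\sc}{2}\|\btt - \bto\|_2^2$ and the RSS upper bound relating $f(\btt) - \min_{supp \subseteq Z^t} f$ to $\|\gt_{Z^t}\|_2^2$, one extracts that a large residual $f(\bbt) - f(\bto)$ forces a large $\|\gt_{\So\setminus\St}\|$, which in turn is captured inside $Z^t$; quantitatively this should yield $f(\bbt) - f(\bto) \le c\cdot\frac{\ss}{\sc}\cdot\frac{\so}{\ell}\,(f(\btt) - f(\bto))$ or similar.

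The main obstacle I anticipate is getting the bookkeeping in step (iii) tight enough that the product of the two factors — the ``hard thresholding blowup'' $1 + \frac{\ss}{\sc}\cdot\frac{\ell}{\s+\ell-\so}$ from \Lem{ht2} and the ``fully-corrective contraction'' from the greedy selection — comes out strictly below $1$ precisely under the stated condition $\s \ge 4\frac{\ss^2}{\sc^2}\ell + \so - \ell$. One has to be careful that $\ell \ge \so$ is used exactly where the greedy top-$\ell$ set is claimed to capture the relevant part of $supp(\bto)\setminus\St$, and that the term $\sc^2\|\bbt_{Z^t\setminus\So}\|_2^2$ from \Lem{full_1} is either discarded (it only helps) or used to absorb a cross term. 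I would also need to verify the RSC/RSS sparsity levels line up: \Lem{full_1} needs RSC/RSS at level $2\s+\so$, \Lem{ht2} needs them at the level $\s + \ell$ (for the projection $P_\s$ of a $(\s+\ell)$-sparse vector), and the theorem hypothesis supplies $\sc_{2\s+\so}$ and $\ss_{2\s+\ell}$, so I would double-check every invocation stays within these budgets. Once the per-iteration factor $\rho < 1$ is pinned down, the final telescoping and the logarithmic iteration count follow exactly as in \Thm{iht_simp}. See Appendix~\ref{app:fully} for the detailed proof.
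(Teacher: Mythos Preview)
Your structure matches the paper's: chain $f(\ttn) \le f(\bwt)$, apply \Lem{ht2} for the blowup of $f(\bwt) - f(\bto)$ over $f(\bbt) - f(\bto)$, then contract $f(\bbt) - f(\bto)$ relative to $f(\btt) - f(\bto)$. The ingredients you list for step~(iii) --- an RSS descent over $Z^t$, the greedy top-$\ell$ selection with $\ell \ge \so$, and \Lem{full_1} applied at the fully-corrected iterate $\btt$ --- are exactly what the paper uses.

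The one concrete error is your conjectured form of the step-(iii) contraction: you write $f(\bbt) - f(\bto) \le c\,\tfrac{\ss}{\sc}\,\tfrac{\so}{\ell}\,(f(\btt) - f(\bto))$, which would not close the argument (since $\ss/\sc \ge 1$ the prefactor can exceed $1$ regardless of $\s$, and the theorem only assumes $\ell \ge \so$, not $\ell \gtrsim (\ss/\sc)\so$). The correct bound has no $\so/\ell$ or $\ss/\sc$ factor; it is simply
\[
f(\bbt) - f(\bto) \;\le\; \Bigl(1 - \tfrac{\sc}{\ss}\Bigr)\,\bigl(f(\btt) - f(\bto)\bigr).
\]
To get it, set $\z^t$ equal to $\btt$ on $\St$, to $-\tfrac{1}{\ss}\gt$ on $Z^t\setminus\St$, and $0$ elsewhere; RSS together with $\gt_{\St}=0$ (this is where the \emph{previous} iteration's fully-corrective step is used) gives $f(\z^t) \le f(\btt) - \tfrac{1}{2\ss}\|\gt_{Z^t\setminus\St}\|_2^2$. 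Because $\ell \ge \so$ and $Z^t\setminus\St$ is the top-$\ell$ of $|\gt_{\overline{\St}}|$, we have $\|\gt_{Z^t\setminus\St}\|_2^2 \ge \|\gt_{\So\setminus\St}\|_2^2 = \|\gt_{\So\cup\St}\|_2^2$; there is no fractional capture here, which is why no $\so/\ell$ appears. Now apply \Lem{full_1} at $\btt$ (\emph{not} at $\bbt$) to get $\|\gt_{\So\cup\St}\|_2^2 \ge 2\sc\,(f(\btt)-f(\bto))$, whence $f(\bbt) \le f(\z^t) \le f(\btt) - \tfrac{\sc}{\ss}(f(\btt)-f(\bto))$. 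Multiplying this $(1-\tfrac{\sc}{\ss})$ by the \Lem{ht2} blowup $1 + \tfrac{\ss}{\sc}\cdot\tfrac{\ell}{\s+\ell-\so} \le 1 + \tfrac{\sc}{4\ss}$ (from the hypothesis $\s + \ell - \so \ge 4(\ss/\sc)^2\ell$) gives a per-iteration factor at most $1 - \tfrac{\sc}{2\ss}$, and the $O(\tfrac{\ss}{\sc}\log(f(\bt^0)/\eps))$ iteration count follows.
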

See Appendix~\ref{app:fully} for a detailed proof.

\subsection{Partial Hard Thresholding Methods}
\label{sec:pht}
\begin{algorithm}
\caption{Iterative Partial Hard-thresholding}
  \begin{algorithmic}[1]
    \STATE {\bf Input}: function $f$ with gradient oracle, sparsity level $\s$, step size $\eta$, partial thresholding level $\ell$
    \STATE $\bt^1=0$, $t=1$
    \WHILE{\emph{not converged}}
    \STATE $\z^t=\btt-\eta \nabla_{\btt}f(\btt)$, $\St=supp(\btt)$
    \STATE $\vt=\text{PHT}_{\s}(\z^t; \St, \ell)$
	\STATE $\ttn = \arg\min_{\bt, supp(\bt)\subseteq supp(\vt)}f(\bt)$ \hfill // fully corrective step
    \STATE $t=t+1$
	\ENDWHILE
    \STATE {\bf Output}: $\btt$
  \end{algorithmic}\label{algo:pht}
\end{algorithm}
We now study Partial Hard Thresholding methods (PHT), a family of fully-corrective iterative methods introduced by \cite{JainTD11}. This family is known to provide the best known RIP guarantees in the compressive sensing setting, but the proof is restricted to the RIP setting, and for the least-squares objective. An interesting member of this family is Orthogonal Matching Pursuit with Replacement (OMPR), which is also a Forward-Backward Greedy Selection method but performs one forward-backward step per iteration. 

The pseudo code of the general IPHT$(\ell)$ algorithm is given in Algorithm~\ref{algo:pht}. The algorithm is similar to the fully-corrective projected gradient descent (PGD) method, in fact, PHT$(0)$ is indeed exactly same as the fully-corrected PGD method. But, the partial hard-thresholding projection is used instead of hard-thresholding projection. 

The partial hard thresholding operator $\vt=\text{PHT}_{\s}(\z^t; \St, \ell)$ projects $\z^t$ onto the non-convex set of $\s$-sparse vectors s.t. $|supp(\vt)\backslash\St|\leq \ell$. Although, the operator projects onto a non-convex set, still the projection can be performed efficiently by performing hard thresholding only over $\z^t_{\overline{\St}}$ and the $\ell$ smallest elements of $\z^t_{\St}$. That is, let $bot^t = $ smallest $\ell$ elements of $|\z^t_{\St}|$. Then, $$\vt_{\St\backslash bot^t}=\z^t_{\St\backslash bot^t},\ \text{and},\ \vt_{\overline{\St}\cup bot^t}=H_\ell(\z^t_{\overline{\St}\cup bot^t}).$$
We first show that at least a new element is added during each iteration of IPHT($\ell$). 
\begin{lemma}\label{lem:pht_new}
Let $f$, $\s$ be supplied to Algorithm~\ref{algo:pht} and let the RSC and RSS parameters of $f$ be given by $\ss_{2\s+\so}(f)=\ss$ and $\sc_{2\s+\so}(f)=\sc$ respectively. Let $\s\geq 4\left(\frac{\ss}{\sc}\right)^2\so$. Then, either $f(\St)=f(\So)$ or $|\Stn\backslash \St|\geq 1$. That is, at least one new element is added at each iteration of Algorithm~\ref{algo:pht}. 
\end{lemma}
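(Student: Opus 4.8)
The plan is to prove the contrapositive-style statement: if the current fully-corrected iterate is strictly suboptimal, $f(\btt)>f(\bto)$, then a coordinate outside $\St$ enters the support \emph{and survives} the concluding fully-corrective minimization, so $|\Stn\setminus\St|\ge 1$. Write $\gt=\nabla f(\btt)$, $\St=supp(\btt)$ (so $|\St|=\s$ and $\btt_i\ne0$ on $\St$, as maintained by the algorithm), and $g=\|\gt_{\overline\St}\|_\infty$. Because $\btt$ minimizes $f$ over vectors supported on $\St$ we have $\gt_{\St}=\bm0$, hence $\z^t_{\St}=\btt_{\St}$ and $\z^t_{\overline\St}=-\eta\,\gt_{\overline\St}$. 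Also $g>0$: if $g=0$ then $\gt=\bm0$, and RSC at level $2\s+\so$ gives $f(\bto)\ge f(\btt)+\tfrac{\sc}{2}\|\bto-\btt\|_2^2\ge f(\btt)$, contradicting $f(\btt)>f(\bto)$. Throughout I use the step size $\eta=\tfrac{2}{3\ss}$ (as for IHT).

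\textbf{Step 1: the partial-hard-thresholding step selects a coordinate outside $\St$.} By construction $supp(\vt)$ picks up no coordinate outside $\St$ exactly when $H_\ell$ retains all of $bot^t$, i.e. when $\min_{i\in\St}|\btt_i|\ge\eta g$ (the smallest $|\z^t|$-value over $bot^t$ equals $\min_{i\in\St}|\btt_i|$ since $bot^t$ is the bottom $\ell$ of $\btt_{\St}$, while the largest $|\z^t|$-value over $\overline\St$ is $\eta g$). So it suffices to show $\min_{i\in\St}|\btt_i|<\eta g$. Apply Lemma~\ref{lem:full_1} with the current $\St$: using $\gt_{\St}=\bm0$ it gives $2\sc(f(\btt)-f(\bto))\le\|\gt_{\So\setminus\St}\|_2^2-\sc^2\|\btt_{\St\setminus\So}\|_2^2$, and since the left side is positive,
\[
\sc^2\|\btt_{\St\setminus\So}\|_2^2<\|\gt_{\So\setminus\St}\|_2^2\le|\So\setminus\St|\,g^2\le\so\,g^2 .
\]
Also $|\St\setminus\So|\ge\s-\so>0$ (strictly, since $|\St|=\s>\so$), so $\|\btt_{\St\setminus\So}\|_2^2\ge(\s-\so)\big(\min_{i\in\St}|\btt_i|\big)^2$. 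Combining, $\min_{i\in\St}|\btt_i|<\tfrac{g}{\sc}\sqrt{\so/(\s-\so)}$, and the hypothesis $\s\ge4(\ss/\sc)^2\so$ together with $\eta=\tfrac{2}{3\ss}$ makes $\tfrac1\sc\sqrt{\so/(\s-\so)}\le\eta$ (using $\sc\le\ss$); hence $\min_{i\in\St}|\btt_i|<\eta g$. Consequently a coordinate $j_0\in\overline\St$ attaining $|\gt_{j_0}|=g$ beats the smallest element of $bot^t$ and so lies in $supp(\vt)$.

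\textbf{Step 2: this step strictly decreases $f$, and the new coordinate survives.} Let $A=bot^t\setminus supp(\vt)$ (dropped old coordinates) and $B=supp(\vt)\cap\overline\St$ (new coordinates); then $|A|=|B|\ge1$ (since $j_0\in B$), and $\vt-\btt$ equals $-\btt_i$ on $A$, $-\eta\gt_j$ on $B$, and $0$ elsewhere. By RSS and $\gt_{\St}=\bm0$ (so $\ip{\vt-\btt}{\gt}=-\eta\|\gt_B\|_2^2$),
\[
f(\vt)-f(\btt)\le-\eta\|\gt_B\|_2^2+\tfrac{\ss}{2}\big(\|\btt_A\|_2^2+\eta^2\|\gt_B\|_2^2\big).
\]
Since $H_\ell$ keeps every element of $B$ in preference to every element of $A$, we have $\eta|\gt_j|\ge|\btt_i|$ for all $j\in B,\ i\in A$, so $\eta^2\|\gt_B\|_2^2\ge|B|\max_{i\in A}\btt_i^2\ge\|\btt_A\|_2^2$ (using $|A|=|B|$). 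Substituting, $f(\vt)-f(\btt)\le\eta(\ss\eta-1)\|\gt_B\|_2^2<0$, as $\ss\eta=\tfrac23<1$ and $\|\gt_B\|_2^2\ge g^2>0$. The final fully-corrective minimization only lowers the value, so $f(\ttn)\le f(\vt)<f(\btt)$. If $\Stn=supp(\ttn)\subseteq\St$, then $\ttn$ would be feasible for the minimization defining $\btt$, forcing $f(\ttn)\ge f(\btt)$ — a contradiction. Hence $|\Stn\setminus\St|\ge1$, and therefore at every iteration either $f(\St)=f(\So)$ or at least one new coordinate is added.

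I expect the crux to be Step 2: one must check that the hard-thresholding inside the partial projection cannot erase more progress than the gradient step creates. The quantitative handle is exactly the selection inequality $\eta^2\|\gt_B\|_2^2\ge\|\btt_A\|_2^2$ (valid because $B$ is chosen over $A$ by $H_\ell$ and $|A|=|B|$), which together with the calibration $\ss\eta<1$ yields a \emph{strict} decrease, in turn forcing the new coordinate to persist after the fully-corrective step. A secondary subtlety is that Lemma~\ref{lem:full_1} is precisely the lever that converts suboptimality of $\btt$ into the lower bound on $g$ relative to $\min_{i\in\St}|\btt_i|$; one must carry the factor $(\s-\so)$ coming from $|\St\setminus\So|$ to see where the condition $\s\ge4(\ss/\sc)^2\so$ is used.
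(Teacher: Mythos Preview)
Your Step~1 is essentially the paper's argument: both assume that PHT introduces no new coordinate, deduce that every entry of $\z^t_{\St}$ dominates $\eta\|\gt_{\overline{\St}}\|_\infty$, and then invoke Lemma~\ref{lem:full_1} together with $\s\ge 4(\ss/\sc)^2\so$ to reach a contradiction. The paper phrases the comparison via the averaged inequality $\|\btt_{\St\setminus\So}\|_2^2/|\St\setminus\So|\ge\eta^2\|\gt_{\So\cup\St}\|_2^2/|\So\setminus\St|$ and uses the general-$\gamma$ version of Lemma~\ref{lem:full_1}, whereas you bound $\min_{i\in\St}|\btt_i|$ pointwise and use the $\gamma=1/\sc$ form; the skeletons are the same.

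Where you go beyond the paper is Step~2. The paper's proof stops once the PHT step is shown to bring in a new coordinate into $supp(\vt)$; it tacitly identifies $\Stn$ with $supp(\vt)$ and never argues that the new coordinate survives the subsequent fully-corrective minimization (line~6 of Algorithm~\ref{algo:pht}). Your strict-descent argument $f(\ttn)\le f(\vt)<f(\btt)$, combined with the optimality of $\btt$ over vectors supported on $\St$, closes this gap cleanly and is a genuine addition. Two minor caveats: (i) you take $\eta=2/(3\ss)$ from the IHT setting, while Theorem~\ref{thm:pht} uses $\eta=1/(2\ss)$; the lemma itself leaves $\eta$ unspecified, and your inequality $\tfrac{1}{\sc}\sqrt{\so/(\s-\so)}\le\eta$ in Step~1 is tight enough that you should verify it for the intended step size. (ii) The assertion $|\St|=\s$ is not literally guaranteed by the algorithm (the corrective step can zero coordinates), though it is the implicit working assumption in the paper as well.
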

\begin{proof}
Suppose no new element is added i.e. $|\Stn\backslash \St|=0$. Since $\ell$ elements of $\Stn$ are selected by hard thresholding $\z^t_{\overline{\St}\cup bot^t}$, hence each element of $\z^t_{\St}$ should be larger (in magnitude) than $\max_{i\in \overline{\St}}|z^t_i|$. Note that, $\z^t_{\overline{\St}}=-\eta \gt_{\overline{\St}}$. Hence, 
\begin{align*}
\frac{\|\z^t_{\St\backslash \So}\|_2^2}{|\St\backslash \So|} = \frac{\|\btt_{\St\backslash \So} - \eta\gt_{\St\backslash \So}\|_2^2}{|\St\backslash \So|} = 
\frac{\|\btt_{\St\backslash \So}\|_2^2}{|\St\backslash \So|} &\geq \frac{\|\z^t_{\So\backslash \St}\|_2^2}{|\So\backslash \St|} = \frac{\|\btt_{\So\backslash \St} - \eta\gt_{\So\backslash \St}\|_2^2}{|\So\backslash \St|} \\ &=\eta^2\frac{\|\gt_{\So\backslash \St}\|_2^2}{|\So\backslash \St|}=\eta^2\frac{\|\gt_{\So\cup \St}\|_2^2}{|\So\backslash \St|},
\end{align*}
where we have used the fact that $\gt_{\St}=0$ and $\btt_{\overline{\St}} = 0$.

Using Lemma~\ref{lem:full_1} and the above equation, we have: 
\begin{align}
  0\leq 2\gamma\left(f(\btt)-f(\bto)-\frac{\sc}{2}\cdot\left(\frac{1}{\sc\gamma}-1\right)\|\btt-\bto\|_2^2\right)\leq \left(\gamma^2-\eta^2\frac{|\St\backslash \So|}{|\So\backslash \St|}\right)\|\gt_{\St\cup \So}\|^2.
\end{align}
The lemma now follows by observing that $\frac{|\St\backslash \So|}{|\So\backslash \St|}\geq \frac{s - \so}{\so} \geq \frac{\gamma^2}{\eta^2}\geq \frac{1}{\eta^2\sc^2}$, by the choice of $s$.
\end{proof}

We now provide the proof of convergence for IPHT($\ell$) method in the general RSC-RSS setting: 
\begin{theorem}
\label{thm:pht}
Let $f, s$ be supplied to Algorithm~\ref{algo:pht}. Also, let the RSC and RSS parameters of $f$ be given by $\sc_{2\s+\so}(f)=\sc$ and $\ss_{2\s+\so}(f)=\ss$ respectively. Let $\s\geq 4\left(\frac{\ss}{\sc}\right)^2\so$ and let $\eta=\frac{1}{2\ss}$. Then, the $\tau$-th iterate of Algorithm~\ref{algo:pht} satisfies: 
$$f(\bt^\tau)-f(\bto)\leq \left(1-\frac{\sc}{4\ss}\cdot \frac{1}{\ell+1}\right)^\tau\cdot \left(f(\bt^0)-f(\bto)\right),$$
where $\bto=\arg\min_{\bt, \|\bt\|_0\leq \so}f(\bt)$.
\end{theorem}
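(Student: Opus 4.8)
\textbf{Proof proposal for Theorem~\ref{thm:pht}.}

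The plan is to establish a per-iteration geometric decrease $f(\ttn)-f(\bto)\leq \left(1-\tfrac{\sc}{4\ss}\cdot\tfrac{1}{\ell+1}\right)\left(f(\btt)-f(\bto)\right)$ and then iterate $\tau$ times. Fix an iteration $t$, write $\St=supp(\btt)$, $\So=supp(\bto)$, and let $\vt=\text{PHT}_\s(\z^t;\St,\ell)$ with $\z^t=\btt-\eta\gt$. Since the fully corrective step only decreases the objective, it suffices to lower bound $f(\btt)-f(\vt)$ (which equals $f(\btt)-f(\ttn)$ up to the fully-corrective slack, which is in our favor). First I would use the RSS property at sparsity $2\s+\so$ to write $f(\vt)-f(\btt)\leq \ip{\vt-\btt}{\gt}+\tfrac{\ss}{2}\|\vt-\btt\|_2^2$, complete the square as in the proof of Theorem~\ref{thm:iht_simp} to express the right-hand side as $\tfrac{\ss}{2}\|\vt-\btt+\tfrac{1}{\ss}\gt\|_2^2-\tfrac{1}{2\ss}\|\gt\|_2^2$ (all restricted to $\It=\St\cup\So\cup supp(\vt)$, since $\gt_{\St}$ enters only through these coordinates), and then note that with $\eta=\tfrac{1}{2\ss}$ the relevant quantity is $\|\vt-\z^t\|_2^2$ where $\z^t$ agrees with $\btt-\tfrac{1}{2\ss}\gt$.

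The heart of the argument is to control $\|\vt-\z^t\|_2^2$ using the partial-thresholding selection rule. Because $\vt$ retains $\z^t_{\St\setminus bot^t}$ exactly and hard-thresholds $\z^t_{\overline{\St}\cup bot^t}$ to its top $\ell$ coordinates, we have $\|\vt-\z^t\|_2^2$ equal to the sum of squares of the $|\overline{\St}\cup bot^t|-\ell$ smallest entries of $\z^t$ on that index set. The key combinatorial step, exactly as in Lemma~\ref{lem:pht_new}, is that the discarded coordinates are no larger (in average square magnitude) than the $\ell$ retained ones; in particular, since $\So\setminus\Stn$ has size at most $\ell$ and those coordinates were available to be selected, the average squared magnitude of $\z^t$ over the discarded set $\It\setminus(\St\cup\So)$ is at most that over $\So\setminus\Stn\subseteq bot^t\cup\overline{\St}$. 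Combining with $\z^t_{\St}=\btt_{\St}-\tfrac{1}{2\ss}\gt_{\St}=\btt_{\St}$ (since $\gt_{\St}=0$ after the fully-corrective step of the previous iteration) and $\z^t_{\overline{\St}}=-\tfrac{1}{2\ss}\gt_{\overline{\St}}$, I would bound $\|\vt-\z^t\|_2^2$ by a $\tfrac{1}{\ell+1}$-type factor times $\|\z^t-(\bto\text{-like vector})\|_2^2$, analogous to the $\tfrac{|\It|-\s}{|\It|-\so}$ factor in Lemma~\ref{lem:ht1} but sharpened by the $\ell$ restriction: at least one coordinate of $\So$ is captured each step (Lemma~\ref{lem:pht_new}), so the "leakage" fraction is $\tfrac{\ell}{\ell+1}$ rather than close to $1$.

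Next I would feed this back: bound the $\bto$-centered term $\|\bto_{\It}-\btt_{\It}+\tfrac{1}{\ss}\gt_{\It}\|_2^2$ above using RSS, exactly as the first term is handled in \eqref{eq:iht_2}, and bound $f(\btt)-f(\bto)\leq \tfrac{1}{2\sc}\|\gt_{\St\cup\So}\|_2^2$ via RSC (the inequality already displayed after \eqref{eq:iht_2}). Assembling these, the net coefficient on $f(\btt)-f(\bto)$ works out — using $\s\geq 4(\ss/\sc)^2\so$ to absorb cross terms and make the $\bto$-centered contribution small — to $1-\tfrac{\sc}{4\ss}\cdot\tfrac{1}{\ell+1}$. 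Finally, unrolling the recursion over $\tau$ iterations gives the stated bound. The main obstacle I anticipate is the bookkeeping in the combinatorial step: precisely relating the discarded coordinates of $\z^t$ to $\So\setminus\Stn$ and tracking how the per-step capture of a new support element (Lemma~\ref{lem:pht_new}) converts into the clean $\tfrac{1}{\ell+1}$ factor, while keeping all index sets ($\St$, $bot^t$, $\It\setminus(\St\cup\So)$, $\So\setminus\Stn$) consistent and making sure the $\gt_{\St}=0$ simplification from the fully-corrective step is legitimately available.
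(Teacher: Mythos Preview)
Your approach has a genuine gap and diverges substantially from the paper's. The paper does \emph{not} adapt the IHT square-completion argument; instead it exploits $\gt_{\St}=\bm 0$ (from the fully-corrective step) directly: writing $\ip{\vt-\btt}{\gt}=-\eta\|\gt_{\Stn\setminus\St}\|_2^2$ and using the PHT selection rule to bound $\|\btt_{\St\setminus\Stn}\|_2^2\leq \eta^2\|\gt_{\Stn\setminus\St}\|_2^2$ yields cleanly $f(\vt)-f(\btt)\leq -\eta(1-\eta\ss)\|\gt_{\Stn\setminus\St}\|_2^2=-\tfrac{1}{4\ss}\|\gt_{\Stn\setminus\St}\|_2^2$. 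The remainder of the proof is a \emph{three-case analysis} (depending on how $|\Stn\setminus\St|$ compares with $\ell$ and $|\So\setminus\St|$) that lower bounds $\|\gt_{\Stn\setminus\St}\|_2^2$ in terms of $f(\btt)-f(\bto)$, and the workhorse there is \Lem{full_1}, which gives $2\sc(f(\btt)-f(\bto))\leq \|\gt_{\So\setminus\St}\|_2^2-\sc^2\|\btt_{\St\setminus\So}\|_2^2$ --- a lemma you do not invoke at all.

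Your combinatorial step fails at a concrete point: you assert that $|\So\setminus\Stn|\leq \ell$, but this is false in general --- for OMPR one has $\ell=1$ while $|\So\setminus\Stn|$ can be as large as $\so$. Consequently the averaging argument you sketch (comparing discarded coordinates to $\So\setminus\Stn$) does not produce the claimed contraction. Relatedly, your derivation of the $\tfrac{1}{\ell+1}$ factor from \Lem{pht_new} is not sound: that lemma only guarantees that \emph{some} new element enters the support, not one from $\So$, and in any case ``one element captured per step'' does not translate into a $\tfrac{\ell}{\ell+1}$ leakage fraction in the way you suggest. Finally, the step-size mismatch you flag is real: with $\eta=\tfrac{1}{2\ss}$ the quantity $\|\vt-\z^t\|_2^2$ is not the completed square $\|\vt-\btt+\tfrac{1}{\ss}\gt\|_2^2$, and the discrepancy cannot be absorbed without additional work. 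The fix is to abandon the IHT-style square completion and instead reduce to bounding $\|\gt_{\Stn\setminus\St}\|_2^2$ from below via the case analysis and \Lem{full_1}, using $\s\geq 4(\ss/\sc)^2\so$ only in the case where not all of $\So\setminus\St$ is captured in $\Stn$.
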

This implies that for $\tau=O\left(\frac{\ss\ell}{\sc}\cdot\log(\frac{f(\bt^0)}{\epsilon})\right)$, we have $f(\bt^\tau)-f(\bto)\leq \epsilon$. See Appendix~\ref{app:fully} for a detailed proof.

\section{Experiments}
\label{sec:exp}
\begin{figure*}[t]
               \centering
			   \hspace*{-1ex}
               \subfigure[\!\!\!\!\!\!\!\!]{
                              \includegraphics[scale=0.6]{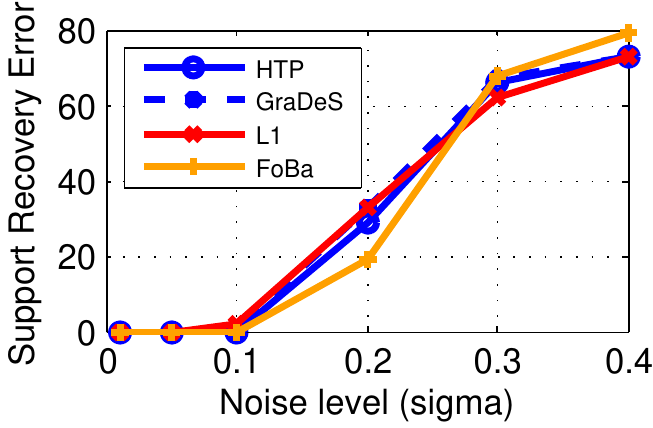}
                              \label{subfig:noise}
               }\hspace*{-1ex}
               \subfigure[\!\!\!\!\!\!\!\!]{
                              \includegraphics[scale=0.6]{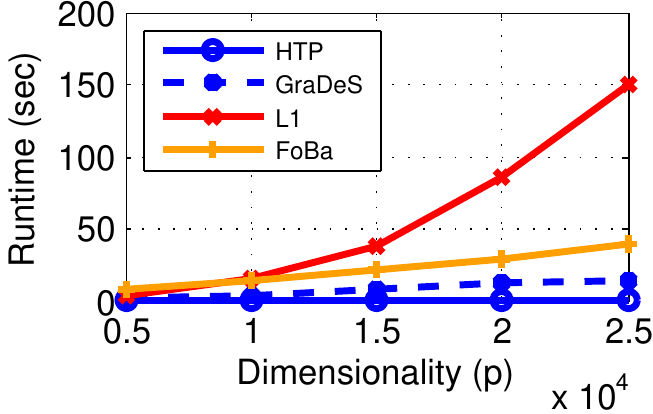}
                              \label{subfig:dimen}
               }\hspace*{-1ex}           
               \subfigure[\!\!\!\!\!\!\!\!]{
               				
                              \includegraphics[scale=0.6]{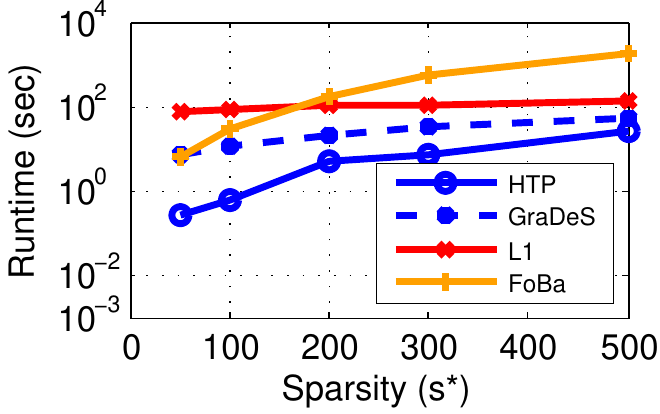}
                              \label{subfig:sparsity}
               }\hspace*{-1ex}
               \subfigure[\!\!\!\!\!\!\!\!]{
               				
                              \includegraphics[scale=0.6]{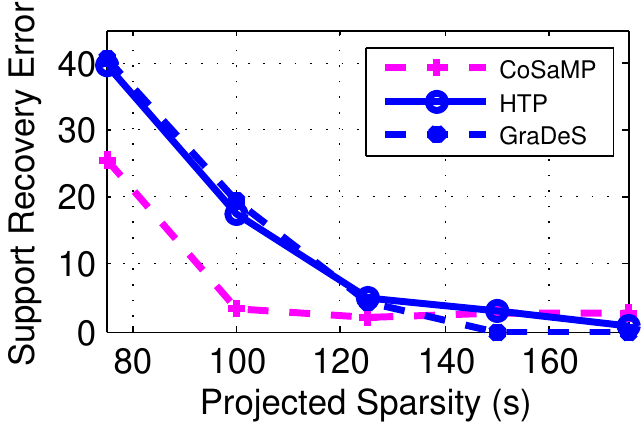}
                              \label{subfig:oversamp}
               }
               %\vspace*{-2ex}
               \caption{A comparison of hard thresholding techniques (HTP) and projected gradient methods (GraDeS) with L1 and greedy methods (FoBa) on sparse noisy linear regression tasks. \ref{subfig:noise} gives the number of undiscovered elements from $supp(\bto)$ as label noise levels are increased. \ref{subfig:dimen} shows the variation in running times with increasing dimensionality $p$. \ref{subfig:sparsity} gives the variation in running times (in logscale) when the true sparsity level $\so$ is increased keeping $p$ fixed. HTP and GraDeS are clearly much more scalable than L1 and FoBa. \ref{subfig:oversamp} shows the recovery properties of different IHT methods under large condition number ($\kappa = 50$) setting as the size of projected set is increased. 
               %\vspace*{-1ex}
				}
               \label{fig:expts}
\end{figure*}
%effect of increasing sample sizes relative to the base value $\so\cdot\log p$.

We conducted simulations on high dimensional sparse linear regression problems to verify our predictions. Our experiments demonstrate that hard thresholding and projected gradient techniques can not only offer recovery in stochastic setting, but offer much more scalable routines for the same.\\

\textbf{Data}: Our problem setting is identical to the one described in the previous section. We fixed a parameter vector $\btbar$ by choosing $\so$ random coordinates and setting them randomly to $\pm 1$ values. Data samples were generated as $Z_i = (X_i,Y_i)$ where $X_i \sim \mathcal{N}(0,I_p)$ and $Y_i = \ip{\btbar}{X_i} + \xi_i$ where $\xi_i \sim \mathcal{N}(0,\sigma^2)$. We studied the effect of varying dimensionality $p$, sparsity $\so$, sample size $n$ and label noise level $\sigma$ on the recovery properties of the various algorithms as well as their run times. We chose baseline values of $p = 20000, \so = 100, \sigma = 0.1, n = f_o\cdot\so\log p$ where $f_o$ is the oversampling factor with default value $f_o = 2$. Keeping all other quantities fixed, we varied one of the quantities and generated independent data samples for the experiments.\\

\textbf{Algorithms}: We studied a variety of hard-thresholding style algorithms including HTP \cite{Foucart11}, GraDeS \cite{GargK2009} (or IHT \cite{BlumensathD2009}), CoSaMP \cite{NeedellT08}, OMPR \cite{JainTD11} and SP \cite{DaiM09}. We compared them with a standard implementation of the L1 projected scaled sub-gradient technique \cite{Schmidt10} for the lasso problem and a greedy method FoBa \cite{Zhang11} for the same.\\

\textbf{Evaluation Metrics}: For the baseline noise level $\sigma = 0.1$, we found that all the algorithms were able to recover the support set within an error of $2\%$. Consequently, our focus shifted to running times for these experiments. In the experiments where noise levels were varied, we recorded, for each method, the number of undiscovered support set elements.\\

\textbf{Results}: Figure\ref{fig:expts} describes the results of our experiments in graphical form. For sake of clarity we included only HTP, GraDeS, L1 and FoBa results in these graphs. Graphs for the other algorithms CoSaMP, SP and OMPR can be seen in the supplementary material. The graphs indicate that whereas hard thresholding techniques are equally effective as L1 and greedy techniques for recovery in noisy settings, as indicated by Figure\ref{subfig:noise}, the former can be much more efficient and scalable than the latter. For instance, as Figure\ref{subfig:dimen}, for the base level of $p = 20000$, HTP was $150\times$ faster than the L1 method. For higher values of $p$, the runtime gap widened to more than $350\times$. We also note that in both these cases, HTP actually offered exact support recovery whereas L1 was unable to recover $2$ and $4$ support elements respectively.

Although FoBa was faster than L1 on Figure\ref{subfig:dimen} experiments, it was still slower than HTP by $50\times$ and $90\times$ for $p = 20000$ and $25000$ respectively. Moreover, due to its greedy and incremental nature, FoBa was found to suffer badly in settings with larger true sparsity levels. As Figure~\ref{subfig:sparsity} indicates, for even moderate sparsity levels of $\so = 300$ and $500$, FoBa is $60-75\times$ slower than HTP. As mentioned before, the reason for this slowdown is the greedy approach followed by FoBa: whereas HTP took less than $5$ iterations to converge for these two problems, FoBa spend $300$ and $500$ iterations respectively. GraDeS was found to offer much lesser run times in comparison being slower than HTP by $30-40\times$ for larger values of $p$ and $2-5\times$ slower for larger values of $\so$.\\

\textbf{Experiments on badly conditioned problems.} We also ran experiments to verify the performance of IHT algorithms in high condition number setting. Values of $p, \so$ and $\sigma$ were kept at baseline levels. After selecting the optimal parameter vector $\btbar$, we selected $\so/2$ random coordinates from its support and $\so/2$ random coordinates outside its support and constructed a covariance matrix with heavy correlations between these chosen coordinates. The condition number of the resulting matrix was close to $50$. Samples were drawn from this distribution and the recovery properties of the different IHT-style algorithms was observed as the projected sparsity levels $\s$ were increased. Our results (see Figure~\ref{subfig:oversamp}) corroborate our theoretical observation that these algorithms show a remarkable improvement in recovery properties for ill-conditioned problems with an enlarged projection size.

%All methods were found to be rather stable with respect to variation in sample size. This is partly due to the fact that on the one hand, increased sample size simplifies the recovery problem, on the other hand it makes individual iterations more expensive to compute.
%%% Local Variables: 
%%% mode: latex
%%% TeX-master: "lowsparse_matrix"
%%% End: 

%\vspace{-4pt}
\section{Discussion and Conclusions}
\label{sec:conc}
%\vspace{-4pt}
In our work we studied iterative hard thresholding algorithms and showed that these techniques can offer global convergence guarantees for arbitrary, possibly non-convex, differentiable objective functions, which nevertheless satisfy Restricted Strong Convexity/Smoothness (RSC/RSM) conditions. Our results apply to a large family of algorithms that includes existing algorithms such as IHT, GraDeS, CoSaMP, SP and OMPR. Previously the analyses of these algorithms required stringent RIP conditions that did not allow the (restricted) condition number to be larger than universal constants specific to these algorithms.

Our basic insight was to relax this stringent requirement by running these iterative algorithms with an enlarged support size. We showed that guarantees for high-dimensional M-estimation follow seamlessly from our results by invoking results on RSC/RSM conditions that have already been established in the literature for a variety of statistical settings. Our theoretical results put hard thresholding methods on par with those based on convex relaxation or greedy algorithms. Our experimental results demonstrate that hard thresholding methods outperform convex relaxation and greedy methods in terms of running time, sometime by orders of magnitude, all the while offering competitive or better recovery properties.

Our results apply to sparsity and low rank structure, arguably two of the most commonly used structures in high dimensional statistical learning problems. In future work, it would be interesting to generalize our algorithms and their analyses to more general structures. A unified analysis for general structures will probably create interesting connections with existing unified frameworks such as those based on decomposability \cite{NegahbanRWY2012} and atomic norms \cite{ChandrasekaranRPW2012}.

%\clearpage
%{\small
\bibliography{refs}
\bibliographystyle{unsrt}
%}
%\clearpage
\appendix
\section{Proofs for Section~\ref{sec:algo}}\label{app:iht}

\newcommand{\btz}{\bt^{\z}}

\begin{proof}[Proof of Lemma~\ref{lem:ht1}]
Without loss of generality, assume that we have reordered coordinates such that $|\z_1| \geq |\z_2| \geq \ldots \geq |\z_I|$. Since the projection operator $P_s(\cdot)$ operates by selecting the largest elements by magnitude, we have $\bt_1 = \z_1, \ldots, \bt_s = \z_s$ and $\bt_{s+1} = \bt_{s+2} = \ldots = \bt_{|I|} = 0$.

Also define $\btz = P_{\so}(\z)$. By the above argument, we have $\btz_1 = \z_1, \ldots, \btz_{\so} = \z_{\so}$ and $\btz_{\so+1} = \btz_{\so+2} = \ldots = \btz_{|I|} = 0$. Now we have
\begin{align}
\frac{\|\btz - \z\|}{|I|-\so} - \frac{\|\bt - \z\|}{|I|-s} &= \frac{1}{|I|-\so}\sum_{i = \so+1}^{s}\z_i^2 + \left(\frac{1}{|I|-\so} - \frac{1}{|I|-s}\right)\sum_{i = s+1}^{|I|}\z_i^2\nonumber\\
& \geq \frac{s-\so}{|I|-\so}\z_s^2 + \frac{\so-s}{(|I|-\so)(|I|-s)}(|I|-s)\z_{s+1}^2 \geq 0,
\end{align}
since the coordinates of $\z$ are arranged in decreasing order of magnitude. Combining the above with the observation that, due to the projection property $\|\bto - \z\| \geq \|\btz - \z\|$, proves the result.
\end{proof}

\begin{proof}[Proof of Theorem~\ref{thm:iht_simp}]
%  Our proof is inspired by the proof of the GradeS method by \cite{GargK08}, but is for general function $f$ and holds under general RSC condition. 
Recall that $\ttn=P_s(\btt-\frac{\eta'}{\ss}\gt)$ where $\eta' = \tfrac{2}{3} < 1$. Let $\St=supp(\btt)$, $\So=supp(\bto)$, and $\Stn=supp(\ttn)$. Also, let $\It=\So\cup \St\cup \Stn$. %Note that, $\btt, \ttn, \bto$ all have support restricted to $\It$. 

Now, using the RSS property and the fact that $supp(\btt)\subseteq \It$ and $supp(\ttn)\subseteq \It$, we have: 
\begin{align}
  f(\ttn)-f(\btt)&\leq \ip{\ttn-\btt}{\gt}+\frac{\ss}{2}\|\ttn-\btt\|_2^2,\nonumber\\
&= \frac{\ss}{2}\|\ttn_{\It}-\btt_{\It}+\frac{\eta'}{\ss} \cdot \gt_{\It}\|_2^2-\frac{(\eta')^2}{2 \ss}\|\gt_{\It}\|_2^2+(1-\eta')\ip{\ttn-\btt}{\gt}.\label{eq:iht_n1_app}
\end{align}
As $supp(\btt)=\St$, $supp(\ttn)=\Stn$ and $\St\backslash \Stn, \Stn$ are disjoint, we have: 
\begin{align}
\ip{\ttn-\btt}{\gt}&= -\ip{\btt_{\St\backslash \Stn}}{\gt_{\St\backslash \Stn}}+\ip{\ttn_{\Stn}-\btt_{\Stn}}{\gt_{\Stn}},\nonumber\\
&\stackrel{\zeta_1}{=}-\ip{\btt_{\St\backslash \Stn}}{\gt_{\St\backslash \Stn}}-\frac{\eta'}{\ss}\|\gt_{\Stn}\|_2^2,\nonumber\\
&\stackrel{\zeta_2}{\leq}\frac{\eta'}{2\ss}\|\gt_{\Stn\backslash\St}\|_2^2-\frac{\eta'}{2\ss}\|\gt_{\St\backslash\Stn}\|_2^2-\frac{\eta'}{\ss}\|\gt_{\Stn}\|_2^2,\nonumber\\
&\stackrel{\zeta_3}{=} -\frac{\eta'}{2\ss}\|\gt_{\Stn\backslash\St}\|_2^2-\frac{\eta'}{2\ss}\|\gt_{\St\backslash\Stn}\|_2^2 - \frac{\eta'}{\ss}\|\gt_{\St \cap \Stn}\|_2^2 \nonumber\\
&\leq -\frac{\eta'}{2\ss}\|\gt_{\St\cup\Stn}\|_2^2, \label{eq:iht_n2_app}
%&=\frac{\ss}{2}\|\ttn_{\It}-\btt_{\It}+\frac{\eta}{\ss} \cdot \gt_{\It}\|_2^2-\frac{\eta^2}{2 \ss}\|\gt_{\It}\|_2^2-\frac{(1-\eta)\eta}{2\ss}\|\gt_{\Stn\backslash\St}\|_2^2-\frac{(1-\eta)\eta}{2\ss}\|\gt_{\St\backslash\Stn}\|_2^2-\nonumber\\&\qquad\qquad\frac{\eta(1-\eta)}{\ss}\|\gt_{\Stn\cap \St}\|_2^2,\nonumber\\
%&=\frac{\ss}{2}\|\ttn_{\It}-\btt_{\It}+\frac{\eta}{\ss} \cdot \gt_{\It}\|_2^2-\frac{\eta^2}{2 \ss}\|\gt_{\It\backslash(\St\cup \So)}\|_2^2-\frac{\eta^2}{2 \ss}\|\gt_{\St\cup \So}\|_2^2-\nonumber\\&\qquad\qquad\frac{(1-\eta)\eta}{2\ss}\|\gt_{\Stn\backslash\St}\|_2^2-\frac{(1-\eta)\eta}{2\ss}\|\gt_{\St\backslash\Stn}\|_2^2-\frac{\eta(1-\eta)}{\ss}\|\gt_{\Stn\cap \St}\|_2^2,\label{eq:iht_2_app}
%&\stackrel{\zeta_1}{\leq}\frac{\ss}{2}\|\ttn_{\It}-\btt_{\It}+\frac{1}{2\ss} \cdot \gt_{\It}\|_2^2-\frac{1}{8 \ss}\|\gt_{\It}\|_2^2+\frac{1}{2}\ip{\ttn}{\gt}+\left(f(0)-f(\btt)\right),\nonumber\\
%&\stackrel{\zeta_1}{\leq} \frac{\ss}{2}\cdot \frac{|\It|-\s}{|\It|-\so}\cdot\|\bto_{\It}-\btt_{\It}+\frac{1}{\ss} \cdot \gt_{\It}\|_2^2-\frac{1}{2 \ss}(\|\gt_{\It\backslash (\St \cup \So)}\|_2^2+\|\gt_{\St\cup \So}\|_2^2), \label{eq:iht_2_app}
\end{align}
where the equality $\zeta_1$ follows from the gradient step, i.e., $\ttn_{\Stn}=\btt_{\Stn}-\frac{\eta'}{\ss}\gt_{\Stn}$. The inequality $\zeta_2$ follows using the fact that $\ttn$ is obtained using hard thresholding and the fact that $|\St\backslash \Stn|=|\Stn\backslash\St|$, as follows:
\begin{align}
  \|\btt_{\St\backslash \Stn}-\frac{\eta'}{L}\gt_{\St\backslash \Stn}\|_2^2\leq\|\ttn_{\Stn\backslash\St}\|_2^2=\frac{(\eta')^2}{\ss^2}\|\gt_{\Stn\backslash\St}\|_2^2.
\end{align}
%hard-thresholding and the fact that $|\St\backslash \Stn|=|\Stn\backslash\St|$. 
%where $\zeta_1$ follows from Lemma~\ref{lem:ht1} and using $\It=(\It\backslash (\St\cup \So))\cup (\St\cup \So)$. 
The equality $\zeta_3$ follows from $\|\gt_{\Stn}\|_2^2 = \|\gt_{\Stn\backslash\St}\|_2^2 + \|\gt_{\St \cap \Stn}\|_2^2$. 

Hence, using \eqref{eq:iht_n1_app} and \eqref{eq:iht_n2_app}, we have: 
\begin{align}
  f(\ttn)-f(\btt)&\leq  \frac{\ss}{2}\|\ttn_{\It}-\btt_{\It}+\frac{\eta'}{\ss} \cdot \gt_{\It}\|_2^2-\frac{(\eta')^2}{2 \ss}\|\gt_{\It}\|_2^2-\frac{\eta'(1-\eta')}{2\ss}\|\gt_{\St\cup\Stn}\|_2^2,\nonumber\\
&=\frac{\ss}{2}\|\ttn_{\It}-\btt_{\It}+\frac{\eta'}{\ss} \cdot \gt_{\It}\|_2^2-\frac{(\eta')^2}{2 \ss}\|\gt_{\It\backslash(\St\cup \So)}\|_2^2-\frac{(\eta')^2}{2 \ss}\|\gt_{\St\cup \So}\|_2^2 \nonumber\\
&\quad -\frac{\eta'(1-\eta')}{2\ss}\|\gt_{\St\cup\Stn}\|_2^2.\label{eq:iht_2_app}
\end{align}
 
Next, let us try to upper bound the first two terms on the right hand side above.
Since $\It\backslash (\St \cup \So)=\Stn\backslash (\St\cup \So) \subseteq \Stn$, we have $\ttn_{\It\backslash (\St \cup \So)}=\btt_{\It\backslash (\St \cup \So)}-\frac{\eta'}{\ss}\gt_{\It\backslash (\St \cup \So)}$. However, as $\btt_{\It\backslash \St}=0$, we actually have $\ttn_{\It\backslash (\St \cup \So)}=-\frac{\eta'}{\ss}\gt_{\It\backslash (\St \cup \So)}$.
Now let us choose a set $R \subseteq \St\backslash \Stn$ such that $|R|=|\Stn\backslash (\St\cup \So)|$.
Such a choice is possible since $|\Stn\backslash (\St\cup \So)|=|\St\backslash \Stn|-|(\Stn\cap \So)\backslash\St|$ (which itself is a consequence of the fact that $|\Stn|=|\St|$).
Moreover, since $\ttn$ is obtained by hard-thresholding $\left(\btt-\frac{\eta'}{\ss}\gt\right)$, for any choice of $R$ made above, we have:
\begin{align}
  \frac{(\eta')^2}{\ss^2}\|\gt_{\It\backslash (\St \cup \So)}\|_2^2=\|\ttn_{\It\backslash (\St \cup \So)}\|_2^2\geq \|\btt_R-\frac{\eta'}{\ss}\gt_R\|_2^2.\label{eq:iht_5}
\end{align}
Using above equation, and the fact that $\ttn_{R}=0$ (since $R \subseteq \overline{\Stn}$), we have: 
\begin{align}
&\quad \frac{\ss}{2}\|\ttn_{\It}-\btt_{\It}+\frac{\eta'}{\ss} \cdot \gt_{\It}\|_2^2-\frac{(\eta')^2}{2 \ss}\|\gt_{\It\backslash (\St \cup \So)}\|_2^2 \nonumber \\
&\leq \frac{\ss}{2}\|\ttn_{\It}-\btt_{\It}+\frac{\eta'}{\ss} \cdot \gt_{\It}\|_2^2-\frac{\ss}{2}\|\ttn_R - \btt_R + \frac{\eta'}{\ss}\gt_R\|_2^2 \nonumber \\
&= \frac{\ss}{2}\|\ttn_{\It\backslash R}-\btt_{\It\backslash R}+\frac{\eta'}{\ss} \cdot \gt_{\It\backslash R}\|_2^2.\label{eq:iht_6}
\end{align}
We can bound the size of $\It\backslash R$ as $|\It\backslash R|\leq |\Stn|+|(\St\backslash \Stn)\backslash R|+|\So|\leq \s+|(\Stn\cap \So)\backslash\St|+\so\leq \s+2\so.$ Also, since $\Stn\subseteq (\It\backslash R)$, we have $\ttn_{\It\backslash R}=P_{\s}(\btt_{\It\backslash R}-\frac{\eta'}{\ss}\gt_{\It\backslash R})$.

Using the above observation with \eqref{eq:iht_6} and Lemma~\ref{lem:ht1}, we get: 
\begin{align}
  & \quad \frac{\ss}{2}\|\ttn_{\It}-\btt_{\It}+\frac{\eta'}{\ss} \cdot \gt_{\It}\|_2^2-\frac{(\eta')^2}{2 \ss}\|\gt_{\It\backslash (\St \cup \So)}\|_2^2 \nonumber\\
  &\leq \frac{\ss}{2}\cdot \frac{|\It\backslash R|-\s}{|\It\backslash R|-\so}\|\bto_{\It\backslash R}-\btt_{\It\backslash R}+\frac{\eta'}{\ss} \cdot \gt_{\It\backslash R}\|_2^2,\nonumber\\
&\stackrel{\zeta_1}{\leq} \frac{\ss}{2}\cdot \frac{2\so}{\s+\so}\|\bto_{\It}-\btt_{\It}+\frac{\eta'}{\ss} \cdot \gt_{\It}\|_2^2,\nonumber\\
&= \frac{2\so}{\s+\so}\cdot \left(\eta'\ip{\bto-\btt}{\gt}+\frac{\ss}{2}\|\bto-\btt\|_2^2+\frac{(\eta')^2}{2\ss}\|\gt_{\It}\|_2^2\right),\nonumber\\
&\stackrel{\zeta_2}{\leq} \frac{2\so}{\s+\so}\cdot \left(\eta' f(\bto)-\eta' f(\btt)+\frac{\ss-\eta'\sc}{2}\|\bto-\btt\|_2^2+\frac{(\eta')^2}{2\ss}\|\gt_{\It}\|_2^2\right),\label{eq:iht_7}
\end{align}
where the inequality $\zeta_1$ follows by $|\It\backslash R|\leq \s+2\so$ as shown earlier and the observation that $\frac{x-a}{x-b}$ is a positive and increasing function on the interval $x\geq a$ if $a \geq b \geq 0$. Note that since we have $\Stn\subseteq (\It\backslash R)$, we get $|\It\backslash R| \geq s$.
The inequality $\zeta_2$ follows by using RSC.%, and Lemma~\ref{lem:diff}. $\zeta_3$ follows by using the assumption that $f(\btt)\geq \beta f(\bto)$. 

Using \eqref{eq:iht_2_app},  \eqref{eq:iht_7}, and using $\Stn\backslash (\St\cup \So)\subseteq (\Stn\cup \St)$, we get: 
\begin{align}
\nonumber f(\ttn)-f(\btt) \leq {} & \frac{2\so}{\s+\so}\cdot \left(\eta' f(\bto)-\eta'	 f(\btt)+\frac{\ss-\eta'\sc}{2}\|\bto-\btt\|_2^2+\frac{(\eta')^2}{2\ss}\|\gt_{\It}\|_2^2\right)\\
& -\frac{(\eta')^2}{2\ss}\|\gt_{\St\cup \So}\|_2^2 - \frac{\eta'(1-\eta')}{2\ss}\|\gt_{\Stn\backslash(\St\cup \So)}\|_2^2. \label{eq:iht_8}
\end{align}

We now set $\eta' = 2/3$ as per our earlier choice and set $s = 32\left(\frac{L}{\alpha}\right)^2\so$, so that we have $\frac{2\so}{\s+\so}\leq \frac{\sc^2}{16\ss(\ss - \eta'\sc)}$. Since $L \geq \alpha$, we also have $\frac{\sc^2}{16\ss(\ss - \eta'\sc)} \leq \frac{3}{16}$. Using these inequalities, we now rearrange the terms in \eqref{eq:iht_8} above.

\begin{align}
\nonumber f(\ttn)-f(\btt) \leq {} & \frac{2\so}{\s+\so}\cdot \eta'\cdot \left(f(\bto) - f(\btt)\right)+\frac{\sc^2}{32\ss}\|\bto-\btt\|_2^2+\frac{1}{24\ss}\|\gt_{\It}\|_2^2\\
& -\frac{2}{9\ss}\|\gt_{\St\cup \So}\|_2^2 - \frac{1}{9\ss}\|\gt_{\Stn\backslash(\St\cup \So)}\|_2^2. \label{eq:iht_8_5}
\end{align}

Splitting $\|\gt_{\It}\|_2^2 = \|\gt_{\St\cup \So}\|_2^2 + \|\gt_{\Stn\backslash(\St\cup \So)}\|_2^2$ gives us

\begin{align}
f(\ttn)-f(\btt) \leq {} & \frac{2\so}{\s+\so}\cdot \eta'\cdot\left(f(\bto)-f(\btt)\right)-\frac{1}{2\ss}\left(\frac{13}{36}\|\gt_{\St\cup \So}\|_2^2-\frac{\sc^2}{16}\|\bto-\btt\|_2^2\right)\nonumber\\
& - \frac{1}{2\ss}\cdot\left(\frac{4}{9}-\frac{1}{12}\right)\|\gt_{\Stn\backslash(\St\cup \So)}\|_2^2,\nonumber\\
\leq {} & \frac{2\so}{\s+\so}\cdot \eta'\cdot\left(f(\bto)-f(\btt)\right)-\frac{13}{72\ss}\left(\|\gt_{\St\cup \So}\|_2^2-\frac{\sc^2}{4}\|\bto-\btt\|_2^2\right)\nonumber\\
\leq {} & \frac{2\so}{\s+\so}\cdot \eta'\cdot \left(f(\bto)-f(\btt)\right)-\frac{\sc}{12\ss} \left(f(\btt)-f(\bto)\right),\label{eq:iht_9}
\end{align}
where the last inequality above follows using Lemma~\ref{lem:gt_sc_supp}. The result now follows by observing that $\frac{2\so}{\s+\so}\geq 0$. 
% That is, there exists a set $R\subseteq \St\backslash \Stn$, s.t., $R$ consists of  \begin{align}
% &\stackrel{\zeta_1}{\leq} \frac{|\It|-\s}{|\It|-\so} \cdot \frac{\ss}{2}\cdot \|\bto_{\It}-\btt_{\It}+\frac{1}{\ss} \cdot \gt_{\It}\|_2^2-\frac{1}{2 \ss}\|\gt_{\It}\|_2^2,\nonumber\\
% &\stackrel{\zeta_2}{\leq} \frac{|\It|-\s}{|\It|-\so}\cdot \left(\ip{\bto-\btt}{\gt}+\frac{\sc}{2}\|\bto-\btt\|_2^2+\frac{\ss-\sc}{2}\|\bto-\btt\|_2^2\right),\nonumber\\
% &\stackrel{\zeta_3}{\leq} \frac{|\It|-\s}{|\It|-\so}\cdot \left(f(\bto)-f(\btt)+\frac{L-\alpha}{\alpha}(\sqrt{f(\btt)}+\sqrt{f(\bto)})^2\right)\nonumber\\
% &\stackrel{\zeta_4}{\leq} \frac{|\It|-\s}{|\It|-\so}\cdot \left(\frac{1}{\beta}-1+\frac{L-\alpha}{\alpha}\left(1+\frac{1}{\sqrt{\beta}}\right)^2\right)\cdot f(\btt)
% \end{align}
\end{proof}

\begin{lemma}\label{lem:gt_sc_supp}
  $$\left(\|\gt_{\St\cup \So}\|_2^2-\frac{\sc^2}{4}\|\bto-\btt\|_2^2\right)\geq \frac{\sc}{2}\cdot \left(f(\btt)-f(\bto)\right).$$
\end{lemma}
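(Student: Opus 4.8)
The plan is to read the estimate off a \emph{single} application of restricted strong convexity between $\btt$ and $\bto$, taking care not to discard the quadratic slack: merely completing the square in the RSC inequality gives the weaker bound $f(\btt)-f(\bto)\le\frac{1}{2\sc}\|\gt_{\St\cup\So}\|_2^2$ used elsewhere in the analysis, which throws away precisely the $\|\btt-\bto\|_2^2$ term this lemma retains. So instead I apply Definition~\ref{defn:rsc} to the pair $(\bto,\btt)$ --- legitimate since $\|\bto\|_0+\|\btt\|_0\le\so+\s\le2\s+\so$ --- recall $\gt=\nabla_{\bt}f(\btt)$, restrict the inner product to $\St\cup\So$ (valid because $\btt-\bto$ is supported there), and rearrange, obtaining
\[
\ip{\gt_{\St\cup\So}}{\btt-\bto}\ \ge\ \big(f(\btt)-f(\bto)\big)+\frac{\sc}{2}\|\btt-\bto\|_2^2 ,
\]
which I will call $(\ast)$.

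Granting the intermediate bound $\|\gt_{\St\cup\So}\|_2\ge\frac{\sc}{2}\|\btt-\bto\|_2$ (discussed below), the rest is mechanical: Cauchy--Schwarz gives $\|\gt_{\St\cup\So}\|_2^2\ge\frac{\sc}{2}\,\|\gt_{\St\cup\So}\|_2\,\|\btt-\bto\|_2\ge\frac{\sc}{2}\ip{\gt_{\St\cup\So}}{\btt-\bto}$, so that
\[
\|\gt_{\St\cup\So}\|_2^2-\frac{\sc^2}{4}\|\btt-\bto\|_2^2\ \ge\ \frac{\sc}{2}\Big(\ip{\gt_{\St\cup\So}}{\btt-\bto}-\frac{\sc}{2}\|\btt-\bto\|_2^2\Big)\ \ge\ \frac{\sc}{2}\big(f(\btt)-f(\bto)\big),
\]
the last step being $(\ast)$. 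The degenerate case $\btt=\bto$ is immediate, the right-hand side being $0$ and the left-hand side $\|\gt_{\So}\|_2^2\ge0$.

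The one real obstacle is the intermediate bound $\|\gt_{\St\cup\So}\|_2\ge\frac{\sc}{2}\|\btt-\bto\|_2$. When $f(\btt)\ge f(\bto)$ it falls straight out of $(\ast)$, whose right-hand side is then $\ge\frac{\sc}{2}\|\btt-\bto\|_2^2$, so Cauchy--Schwarz on the left-hand side followed by division by $\|\btt-\bto\|_2$ finishes it. This is exactly the regime in which the lemma is used (it feeds equation~\eqref{eq:iht_9} in the proof of Theorem~\ref{thm:iht_simp}), whereas for a \emph{generic} $\s$-sparse $\btt$ with $f(\btt)<f(\bto)$ the bound can genuinely fail. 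A fully self-contained argument would therefore have to use that $\btt$ is an iterate of Algorithm~\ref{algo:iht}, presumably together with the optimality of $\bto$, which forces $\nabla_{\bt}f(\bto)$ to vanish on $\So$ --- information that $\gt$ does not by itself carry. Establishing that dependence carefully --- or, more cheaply, noting that the analysis of Theorem~\ref{thm:iht_simp} never needs the other case, since $f(\btt)<f(\bto)$ already gives $f(\btt)-f(\bto)<0<\epsilon$ --- is the step I expect to be the most delicate; everything else is just two uses of $(\ast)$ and Cauchy--Schwarz.
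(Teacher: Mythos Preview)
Your argument is correct and uses the same two ingredients as the paper --- one application of RSC at the pair $(\bto,\btt)$ together with Cauchy--Schwarz --- but organizes the algebra a little differently. The paper factors the left-hand side as a difference of squares,
\[
\|\gt_{\St\cup\So}\|_2^2-\tfrac{\sc^2}{4}\|\bto-\btt\|_2^2
=\bigl(\|\gt_{\St\cup\So}\|_2-\tfrac{\sc}{2}\|\bto-\btt\|_2\bigr)\bigl(\|\gt_{\St\cup\So}\|_2+\tfrac{\sc}{2}\|\bto-\btt\|_2\bigr),
\]
bounds the first factor below by $(f(\btt)-f(\bto))/\|\btt-\bto\|_2$ via RSC and Cauchy--Schwarz, and then drops the nonnegative $\|\gt_{\St\cup\So}\|_2$ from the second factor. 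You instead isolate the intermediate inequality $\|\gt_{\St\cup\So}\|_2\ge\tfrac{\sc}{2}\|\btt-\bto\|_2$ and feed it back into $(\ast)$; the two computations are equivalent.

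You are also right that both arguments tacitly require $f(\btt)\ge f(\bto)$: the paper's final step multiplies an inequality by the nonnegative quantity $(f(\btt)-f(\bto))/\|\btt-\bto\|_2$, exactly as your intermediate bound does. Your observation that the lemma, as stated for an arbitrary $\s$-sparse $\btt$, can fail when $f(\btt)<f(\bto)$ --- and that this case is irrelevant for Theorem~\ref{thm:iht_simp} --- is a point the paper leaves implicit.
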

\begin{proof}
  Using the RSC property, we have: 
\begin{align}
  f(\btt)-f(\bto)&\leq \ip{\gt}{\btt-\bto}-\frac{\sc}{2}\|\bto-\btt\|_2^2\nonumber\\
  &=\ip{\gt_{\St\cup \So}}{\btt_{\St\cup \So}-\bto_{\St\cup \So}}-\frac{\sc}{2}\|\bto-\btt\|_2^2,\nonumber\\
&\leq \|\gt_{\St\cup \So}\|_2\|{\btt-\bto}\|_2-\frac{\sc}{2}\|\bto-\btt\|_2^2.\label{eq:iht_3}
\end{align}
Now, 
\begin{align}
  \|\gt_{\St\cup \So}\|_2^2-\frac{\sc^2}{4}\|\bto-\btt\|_2^2&=\left(\|\gt_{\St\cup \So}\|_2-\frac{\sc}{2}\|\bto-\btt\|_2\right)\left(\|\gt_{\St\cup \So}\|_2+\frac{\sc}{2}\|\bto-\btt\|_2\right),\nonumber\\
&\geq \frac{\left(f(\btt)-f(\bto)\right)}{\|\btt-\bto\|_2}\cdot \left(\|\gt_{\St\cup \So}\|_2+\frac{\sc}{2}\|\bto-\btt\|_2\right) \nonumber \\
& \geq \frac{\sc}{2}\cdot \left(f(\btt)-f(\bto)\right),
\end{align}
where the first inequality above follows from \eqref{eq:iht_3}. 
\end{proof}

%%% Local Variables: 
%%% mode: latex
%%% TeX-master: "rsc_iht"
%%% End: 

\section{Proofs for Section~\ref{sec:stat}}
\label{app:stat}

\begin{proof}[Proof of Theorem~\ref{thm:genstat}]
Let $\bto$ be the empirical loss minimizer over the set of $\s$-sparse vectors. Then invoking Theorem~\ref{thm:iht_simp} with $f = \loss(\cdot;Z_{1:n})$, we get
\begin{align*}
\loss(\bt^\tau,Z_{1:n}) -\epsilon &\le \loss(\bto,Z_{1:n}) \le \loss(\btbar,Z_{1:n}) \\
&\le \loss(\bt^{\tau};Z_{1:n}) + \ip{\nabla \loss(\btbar;Z_{1:n})}{(\btbar-\bt^{\tau})} - \frac{\sc_{\s+\so}}{2} \| \btbar - \bt^\tau \|_2^2 
\end{align*}
where the 2nd inequality is by definition of $\bto$ and 3rd is by RSC (since $\bto,\bt^\tau$ are $\so,\s$ sparse). Duality gives us the upper bound
\[
\ip{\nabla \loss(\btbar;Z_{1:n})}{(\btbar-\bt^{\tau})} \le \| \nabla \loss(\btbar;Z_{1:n}) \|_\infty \|  \btbar-\bt^{\tau} \|_1
\le \sqrt{\s+\so} \| \nabla \loss(\btbar;Z_{1:n}) \|_\infty \|  \btbar-\bt^{\tau} \|_2
\]
Combining the last two inequalities and rearranging gives a quadratic inequality in $\|\btbar - \bt^\tau\|_2$:
\[
\frac{\sc_{\s+\so}}{2} \| \btbar - \bt^\tau \|_2^2 - \sqrt{\s+\so} \| \nabla \loss(\btbar;Z_{1:n}) \|_\infty \|  \btbar-\bt^{\tau} \|_2 - \epsilon \le 0
\] 
that immediately yields the result.
\end{proof}

\section{Proofs for Section~\ref{sec:fully}}\label{app:fully}
\newcommand{\Hs}{\bm{H}}
\newcommand{\melem}{MD_t}
\newcommand{\felem}{FA_t}
\newcommand{\bro}[1]{\left({#1}\right)}
\begin{proof}[Proof of Lemma~\ref{lem:full_1}]
We will start by proving a more general result of which the claimed result will be a corollary. More specifically, we shall prove that for any $\gamma\geq \frac{1}{\sc}$, we have
$$2\gamma(f(\btt)-f(\bto))\leq 2\gamma\left(f(\btt)-f(\bto)+\frac{\sc}{2}\cdot\left(1-\frac{1}{\sc\gamma}\right)\|\btt-\bto\|_2^2\right)\leq \gamma^2\|\gt_{\St\cup \So}\|_2^2-\|\btt_{\St\backslash \So}\|_2^2,$$
Setting $\gamma = \frac{1}{\sc}$ will yield the claimed result. It is easy to see that the following inequality holds trivially since $\gamma \geq \frac{1}{\sc}$
\[
2\gamma(f(\btt)-f(\bto))\leq 2\gamma\left(f(\btt)-f(\bto)+\frac{\sc}{2}\cdot\left(1-\frac{1}{\sc\gamma}\right)\|\btt-\bto\|_2^2\right).
\]
For the second inequality, we first use the RSC condition to obtain: %perform a Taylor series expansion for $f$ around $\btt$
\[
f(\bto) - f(\btt) \geq \ip{\bto - \btt}{\gt} + \frac{\sc}{2}\|\btt-\bto\|_2^2.
\]
%where $\Hs_\xi$ is the Hessian of $f$ at a point $\xi = \lambda\btt + (1-\lambda)\bto$ for some $0 \leq \lambda \leq 1$. 
Now let $\melem = \So\backslash\St$ be the set of true support elements missing from $\btt$ and $\felem = \St\backslash\So$ be the set of incorrect elements included in the support of $\btt$. Since $\btt$ is obtained by a ``fully corrective'' process (recall $\btt=\arg\min_{\bt, supp(\bt)\subseteq \St}f(\bt)$), we have $\gt_{\St} = \bm{0}$. Thus $\ip{\bto - \btt}{\gt} = \ip{\bto_{\melem}}{\gt_{\melem}}$.

Putting this into the above expansion gives
\begin{align}
	\label{eq:full-1-1}
	\ip{\bto_{\melem}}{\gt_{\melem}} \leq f(\bto) - f(\btt) - \frac{\sc}{2}\|\btt-\bto\|_2^2
\end{align}
We now present some simple inequalities that will help us get our desired bounds. Firstly, we have
\begin{align}
	\label{eq:full-1-2}
	\|\bto_{\melem} + \gamma\gt_{\melem}\|_2^2 = \|\bto_{\melem}\|_2^2 + \gamma^2\|\gt_{\melem}\|_2^2 + 2\gamma\ip{\bto_{\melem}}{\gt_{\melem}} \geq 0,
\end{align}
since the first expression is a norm. Next, since $\melem \cap \felem = \emptyset$, we have
\begin{align}
	\label{eq:full-1-3}
	\|\bto - \btt\|_2^2 \geq \|\bto_{\melem}\|_2^2 + \|\btt_{\felem}\|_2^2.
\end{align}
%Thirdly, by RSC, and the fact that $\|\xi\|_0 \leq \s + \so$, we have $\Hs_\xi \succeq (1 - \sc_{\s+\so}(f)) I \succeq (1 - \sc) I$ by monotonicity of the RSC parameter. This gives us
%\begin{align}
%	\label{eq:full-1-4}
%	(1-\alpha)\|\bto - \btt\|_2^2 \leq \|\Hs^{1/2}_\xi(\btt-\bto)\|_2^2.
%\end{align}
Putting equations \ref{eq:full-1-1} and \ref{eq:full-1-2}, we have: % together and some rearrangement gives us
\begin{equation}
  \label{eq:full-1-4}
  2\gamma\left(f(\btt) - f(\bto) + \frac{\sc}{2}\|\btt-\bto\|_2^2\right)\leq \|\bto_{\melem}\|_2^2 + \gamma^2\|\gt_{\melem}\|_2^2.
\end{equation}
Now, using \eqref{eq:full-1-3}, we get:
\[
2\gamma\bro{f(\btt) - f(\bto) + \frac{\sc}{2}\bro{1-\frac{1}{\sc\gamma}}\|\btt-\bto\|_2^2} \leq \gamma^2\|\gt_{\melem}\|_2^2 - \|\btt_{\felem}\|_2^2
\]
We finish off the proof by noticing that since $\gt_{\St} = \bm{0}$, we have $\|\gt_{\melem}\|_2^2 = \|\gt_{\St\cup \So}\|_2^2$
\end{proof}

\begin{proof}[Proof of Theorem~\ref{thm:tstage}]
Let $\z^t_{\St}=\btt_{\St}$, $\z^t_{Z^t\backslash \St}=-\frac{1}{\ss}\gt_{Z^t\backslash \St}$, and $\z^t_{\overline{Z^t}}=0$. 

Then, using the RSS property, we have: 
\begin{align}
  f(\z^t)-f(\btt)&\leq \ip{\z^t-\btt}{\gt}+\frac{\ss}{2}\|\z^t-\btt\|_2^2,\nonumber\\
&\stackrel{\zeta_1}{\leq} -\frac{1}{\ss}\|\gt_{Z^t\backslash \St}\|_2^2+\frac{\ss}{2}\|\z^t_{Z^t\backslash \St}\|_2^2,\nonumber\\
&\stackrel{\zeta_2}{=} -\frac{1}{2\ss}\cdot \|\gt_{Z^t\backslash \St}\|_2^2,\nonumber\\
&\stackrel{\zeta_3}{\leq}-\frac{1}{2\ss}\cdot \|\gt_{\So\backslash \St}\|_2^2,\nonumber\\
&\stackrel{\zeta_4}{\leq}-\frac{\sc}{\ss}\cdot\left(f(\btt)-f(\bto)\right),
\label{eq:tst_0}
\end{align}
where $\zeta_1$ follows by observing $\gt_{\St}=0$, and $\St\subseteq Z^t$. $\zeta_2$ follows by $\z^t_{Z^t\backslash \St}=-\frac{1}{\ss}\gt_{Z^t\backslash \St}$. $\zeta_3$ follows by $\ell\geq \so$, and $Z^t\backslash \St$ are the $\ell$ largest elements of $|\gt_{Z^t\backslash \St}|$. 

Now, using Lemma~\ref{lem:ht2} and \eqref{eq:tst_0} along with $f(\ttn)\leq f(\widetilde{\bt}^t)$ and $f(\bbt)\leq f(\z^t)$, we have: 
\begin{equation}
  \label{eq:tst_1}
  f(\ttn)-f(\bto)\leq \left(1-\frac{\sc}{\ss}\right)\cdot\left(1+\frac{\ss}{\sc}\cdot \frac{\ell }{\s+\ell-\so } \right)\cdot \left(f(\btt)-f(\bto)\right).
\end{equation}
Theorem now follows by using the above equation with the assumption that $\s+\ell-\so\geq \frac{4\ss^2 \cdot \ell}{\sc^2}$. 
\end{proof}

%\section{Proofs for Section~\ref{sec:pht}}
%\label{app:pht}
\begin{proof}[Proof of Theorem~\ref{thm:pht}]
Using RSS property: 
\begin{align}
  f(\vt)-f(\btt)&\leq \ip{\vt-\btt}{\gt}+\frac{\ss}{2}\|\vt-\btt\|_2^2,\nonumber\\
&\stackrel{\zeta_1}{\leq} -\eta\|\gt_{\Stn\backslash \St}\|_2^2+\frac{\ss}{2}(\|\vt_{\Stn\backslash \St}\|_2^2+\|\btt_{\St\backslash \Stn}\|^2),\nonumber\\
&\stackrel{\zeta_2}{\leq} -\eta\|\gt_{\Stn\backslash \St}\|_2^2+\ss \|\vt_{\Stn\backslash \St}\|_2^2,\nonumber\\
&\stackrel{\zeta_3}{=} -\left(1-\eta\cdot \ss\right)\cdot \eta\cdot \|\gt_{\Stn\backslash \St}\|_2^2,\label{eq:pht_0}
\end{align}
where $\zeta_1$ follows by observing that $\gt_{\St}=0$ and $\vt_{\Stn\backslash \St}=-\eta \gt_{\Stn\backslash \St}$. $\zeta_2$ follows by the property of PHT operator which ensures that each element of $\vt_{\Stn\backslash \St}$ is bigger than $\btt_{\St\backslash \Stn}$ and by using $|\Stn\backslash \St|=|\St\backslash \Stn|$. $\zeta_3$ follows by using $\vt_{\Stn\backslash \St}=-\eta \gt_{\Stn\backslash \St}$. 

Similar to the analysis given in \cite{JainTD11}, we divide the analysis in three mutually exclusive cases. The lemma then follows by combining \eqref{eq:pht_0} with the case-by-case analyses below and observing that $f(\ttn) \leq f(\vt)$ because of the fully corrective step.

%Now,  there are two cases when $|\Stn\backslash \St|<|\So\backslash \St|$:

{\bf Case 1}: $|\Stn\backslash \St|=\ell<|\So\backslash \St|$. In this case, 
As  $\vt_{\Stn\backslash \St}$ is obtained by selecting $|\Stn\backslash \St|$ largest elements of $\z^t_{\overline{\St}\cup bot^t}$. Hence, 
\begin{eqnarray}
\nonumber
  \eta^2\|\gt_{\Stn\backslash \St}\|_2^2 &\geq& \min\left(1, \frac{|\Stn\backslash \St|}{|\So\backslash \St|}\right)\|\z^t_{\So\backslash \St}\|_2^2\\\nonumber
										 &=& \eta^2\min\left(1, \frac{|\Stn\backslash \St|}{|\So\backslash \St|}\right)\|\gt_{\So\backslash \St}\|_2^2\\
  \label{eq:pht_6}						 &=& \eta^2\min\left(1, \frac{|\Stn\backslash \St|}{|\So\backslash \St|}\right)\|\gt_{\So \cup \St}\|_2^2,
\end{eqnarray}
since $\gt_{\St} = 0$. Now, using the fact that $|{\Stn\backslash \St}|=\ell$, ${|\So\backslash \St|}\leq \so$, and using Lemma~\ref{lem:full_1}, we have: 
\begin{equation}
\|\gt_{\Stn\backslash \St}\|_2^2\geq  2\sc\cdot \min\left(1,\frac{\ell}{\so}\right)\left(f(\btt)-f(\bto)\right). \label{eq:pht_9}
\end{equation}
{\bf Case 2}: $|\Stn\backslash \St|<\ell$, $|\Stn\backslash \St|\leq |\So\backslash \St|$. In this case, each element of $\z^t_{\Stn\cap \St}$ is larger than  each element of $\z^t_{\So\backslash (\Stn\cup \St)}$, else that element of ${\So\backslash (\Stn\cup \St)}$ would have been selected. That is, 
$$\frac{\|\z^t_{\So\backslash (\Stn\cup \St)}\|_2^2}{|\So\backslash (\Stn\cup \St)|}\leq \frac{\|z^t_{(\Stn\cap \St)\backslash \So}\|_2^2}{|{(\Stn\cap \St)\backslash \So}|}.$$
Using $\z^t_{\So\backslash \St}=-\eta \gt_{\So\backslash \St}$, $\z^t_{\St}=\btt_{\St}$ and $(\Stn\cap \St)\backslash \So \subseteq \St\backslash \So$, we have: 
\begin{equation}
  \label{eq:pht_3}
  \eta^2\|\gt_{\So\backslash (\St\cup \Stn)}\|_2^2\leq \frac{\so}{\s-\ell-\so}\|\btt_{\St\backslash \So}\|_2^2,
\end{equation}
where the bound on $\frac{|\So\backslash (\Stn\cup \St)|}{|{(\Stn\cap \St)\backslash \So}|}$ follows by observing ${|\So\backslash (\Stn\cup \St)|}\leq \so$ and ${|{(\Stn\cap \St)\backslash \So}|}\geq \s-\ell-\so$. 
Using \eqref{eq:pht_3} and the fact that each element of $\vt_{(\Stn\backslash \St)\cap \So}$ is selected from the largest $|{(\Stn\backslash \St)\cap \So}|$ elements of $-\eta\cdot \gt_{(\Stn\backslash \St)\cap \So}$  we have: 
\begin{equation}
  \label{eq:pht_4}
  \eta^2\|\gt_{\So\backslash \St}\|_2^2\leq  \eta^2\left(\|\gt_{(\Stn\backslash \St)\cap \So}\|_2^2+\|\gt_{\So\backslash (\Stn\cup \St)}\|_2^2\right)\leq \left(\eta^2\|\gt_{(\Stn\backslash \St)\cap \So}\|_2^2+\frac{\so}{\s-\ell-\so}\|\btt_{\St\backslash \So}\|_2^2\right).
\end{equation}
Using the above equation and Lemma~\ref{lem:full_1}, we have:
\begin{equation}
  \label{eq:pht_5}
  \frac{2}{\sc}\left(f(\btt)-f(\bto)\right)\leq \frac{1}{\sc^2}\|\gt_{(\Stn\backslash \St)\cap \So}\|_2^2+\left(\frac{1}{\sc^2\eta^2}\cdot \frac{\so}{\s-\ell-\so}-1\right)\|\btt_{\St\backslash \So}\|_2^2,
\end{equation}
Using $\s\geq 4\left(\frac{\ss}{\sc}\right)^2\so$, we have: 
\begin{equation}
  \label{eq:pht_7}
\|\gt_{\Stn\backslash \St}\|_2^2\geq 2\sc \left(f(\btt)-f(\bto)\right).
\end{equation}

{\bf Case 3}:  $|\Stn\backslash \St|\geq |\So\backslash \St|$. Now, as  $\vt_{\Stn\backslash \St}$ is obtained by selecting $|\Stn\backslash \St|$ largest elements of $\z^t_{\overline{\St}\cup bot^t}$. Hence, using Lemma~\ref{lem:full_1}, we have: 
\begin{equation}
  \label{eq:pht_1}
  \|\vt_{\Stn\backslash \St}\|_2^2\geq \|\z^t_{\So\backslash \St}\|_2^2=\eta^2\|\gt_{\So\backslash \St}\|_2^2\geq 2\eta^2\cdot \sc \cdot \left(f(\btt)-f(\bto)\right). 
\end{equation}
The lemma now follows by combining \eqref{eq:pht_0}, \eqref{eq:pht_9}, \eqref{eq:pht_7}, and \eqref{eq:pht_1}
%Now, similar to the analysis of \cite{JainTD11}, we analyse $\|\gt_{\Stn\backslash \St}\|_2^2$ using three mutually exclusive cases: 
%\begin{itemize}
%\item $|\Stn\backslash \St|< \ell$: 
%\item {\bf Case 2}: 
%\item {\bf Case 3}: 
%\end{itemize}
\end{proof}

%%% Local Variables: 
%%% mode: latex
%%% TeX-master: "rsc_iht"
%%% End: 

%\input{pht_proof.tex}
%\clearpage
\section{Supplementary Experimental Results}
Below we present plots that were not included in the main text.

\begin{figure*}[ht]
               \centering
			   \hspace*{-4ex}
               \subfigure[\!\!\!\!\!\!\!\!]{
                              \includegraphics[scale=0.55]{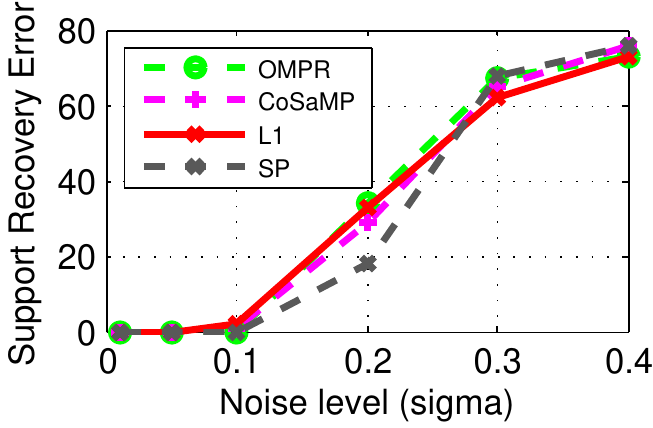}
                              \label{subfig:noise-supp}
               }\hspace*{-1ex}
               \subfigure[\!\!\!\!\!\!\!\!]{
                              \includegraphics[scale=0.55]{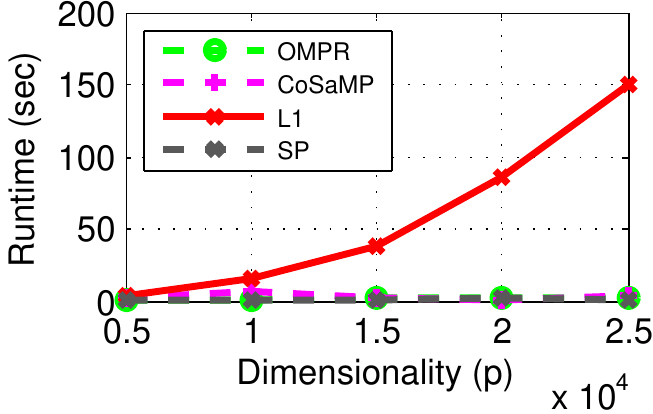}
                              \label{subfig:dimen-supp}
               }\hspace*{-1ex}           
               \subfigure[\!\!\!\!\!\!\!\!]{
               				
                              \includegraphics[scale=0.55]{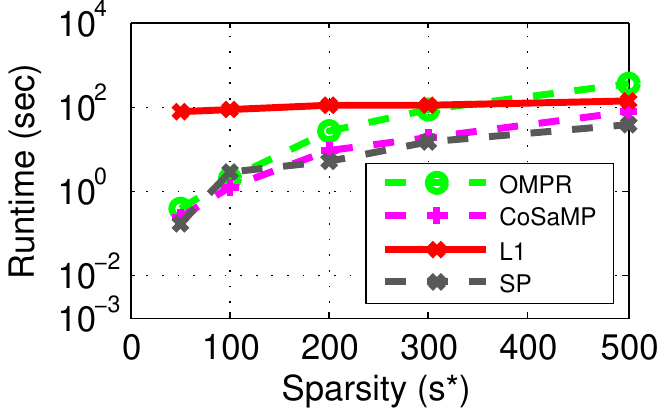}
                              \label{subfig:sparsity-supp}
               }
               %\vspace*{-2ex}
               \caption{Counterparts of Figure~\ref{fig:expts} for OMPR, CoSaMP and L1. 
               %\vspace*{-1ex}
				}
               \label{fig:expts-supp-1}
\end{figure*}

\begin{figure*}[ht]
               \centering
			   \hspace*{-4ex}
               \subfigure[\!\!\!\!\!\!\!\!]{
                              \includegraphics[scale=0.55]{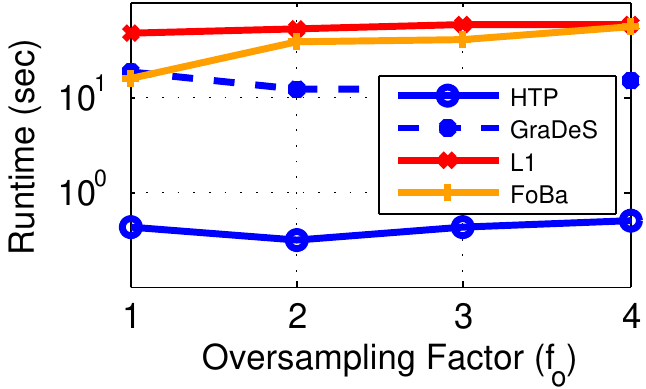}
                              \label{subfig:noise-supp-2}
               }\hspace*{-1ex}
               \subfigure[\!\!\!\!\!\!\!\!]{
                              \includegraphics[scale=0.55]{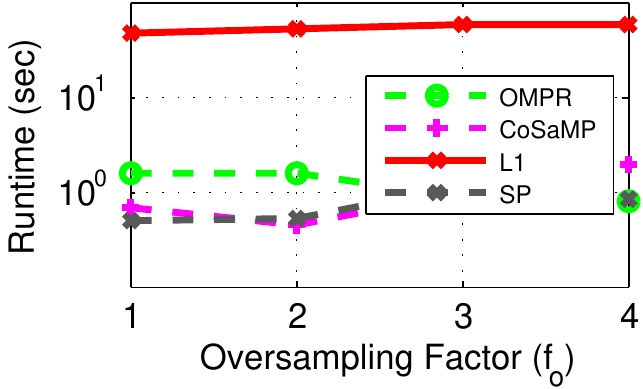}
                              \label{subfig:dimen-supp-2}
               }
               %\vspace*{-2ex}
               \caption{The effect of increasing sample sizes relative to the base value $\so\cdot\log p$ on runtime.
               %\vspace*{-1ex}
				}
               \label{fig:expts-supp-2}
\end{figure*}

%%% Local Variables: 
%%% mode: latex
%%% TeX-master: "lowsparse_matrix"
%%% End: 

\end{document}